\documentclass[twoside,11pt]{article}

\usepackage{jmlr2e}

\usepackage{mathtools}
\usepackage{bm}
\usepackage{booktabs}
\usepackage{multirow}
\usepackage{algorithm}
\usepackage{algpseudocode}
\usepackage{tikz}
\usepackage{pgfplots}
\usepackage{caption}
\usepackage{xcolor}
\pgfplotsset{compat=1.15}
\usetikzlibrary{shapes.geometric, arrows.meta, positioning, calc, patterns, decorations.pathreplacing, backgrounds, fit, matrix}

\definecolor{serverblue}{RGB}{66,133,244}
\definecolor{clientgreen}{RGB}{52,168,83}
\definecolor{updateorange}{RGB}{251,188,5}
\definecolor{heterored}{RGB}{234,67,53}
\definecolor{lightblue}{RGB}{232,240,254}
\definecolor{lightgreen}{RGB}{232,250,234}
\definecolor{lightorange}{RGB}{255,248,232}

\newcommand{\E}{\mathbb{E}}
\newcommand{\R}{\mathbb{R}}

\newcommand{\cO}{\mathcal{O}}
\newcommand{\cD}{\mathcal{D}}
\newcommand{\cS}{\mathcal{S}}

\newcommand{\w}{\mathbf{w}}
\newcommand{\m}{\mathbf{m}}

\newcommand{\g}{\mathbf{g}}

\usepackage{lastpage}
\ShortHeadings{Capacity Confounds in Adaptive Sub-model Federated Learning}{Moayedikia and Troncoso Lora}
\firstpageno{1}

\begin{document}

\title{Capacity Confounds and Coverage Guarantees in Adaptive\\ Sub-model Federated Learning}

\author{\name Alireza Moayedikia \email amoayedikia@swin.edu.au \\
       \addr Department of Business Technology and Entrepreneurship\\
       Swinburne University of Technology\\
       Melbourne, Australia
       \AND
       \name Alicia Troncoso Lora \email atrolor@upo.es \\
       \addr Data Science \& Big Data Lab\\
       Universidad Pablo de Olavide\\
       Seville, Spain}

\editor{}

\maketitle
\thispagestyle{plain} 

\begin{abstract}%
Sub-model federated learning enables resource-constrained clients to train width-reduced versions of a global model, but existing methods allocate capacity according to device resources alone.
A natural next step, allocating capacity according to each client's \emph{data} heterogeneity as estimated from the updates the server already observes, has been repeatedly suggested in recent literature.
This paper asks whether that step is actually possible, using HAS-FL, an adaptive capacity-allocation framework, as a test case for a systematic analysis.
Our findings are threefold.
First, when validated against ground-truth label-distribution divergence on reproducible partitions, update-divergence estimates of client heterogeneity turn out to be dominated by \emph{capacity} rather than by data.
Across two corrected estimators, multiple datasets, and all seeds, the estimates correlate strongly and negatively with device capacity, and no data signal remains once capacity is controlled for.
This is a previously undocumented confound that affects any method estimating client statistics from sub-model updates.
Second, adaptive allocation has a hidden failure mode.
When every client is capped below full width, the uncovered parameters stay at their random initialization and progressively corrupt the global model.
A simple coverage guarantee removes the failure and also explains why uniform allocation collapses.
Third, a matched-budget control settles what adaptivity contributes.
Allocating capacity at random to the same average budget performs no differently from the heterogeneity-aware policy on both image benchmarks, and on the naturally partitioned text benchmark the adaptive policy is the weakest of the three allocation strategies while consuming the most capacity.
Sub-model training remains valuable because it admits resource-constrained clients at a quadratically reduced cost, but the accuracy it gives up relative to full-model training is substantial, and what protects that accuracy is parameter coverage rather than allocation intelligence.
Together these results change how heterogeneity-aware sub-model allocation should be understood.
Its apparent benefits come from capacity budgeting and parameter coverage rather than from heterogeneity estimation, and future designs will need heterogeneity signals that can be separated from capacity effects.
\end{abstract}

\begin{keywords}
  federated learning, sub-model training, system heterogeneity, statistical heterogeneity, empirical analysis
\end{keywords}

\section{Introduction}
\label{sec:introduction}

Federated learning (FL) has emerged as a promising paradigm for privacy-preserving distributed machine learning, enabling multiple clients to collaboratively train a shared model without exchanging raw data~\citep{mcmahan2017communication,kairouz2021advances}.
This approach addresses critical privacy concerns in sensitive domains such as healthcare~\citep{dayan2021federated,rieke2020future}, mobile computing~\citep{hard2019federated}, and Internet of Things (IoT) applications~\citep{nguyen2021federated,khan2021federated}.
However, the practical deployment of FL systems faces two fundamental challenges that significantly hinder their effectiveness: statistical heterogeneity arising from non-independent and identically distributed (non-IID) data across clients, and system heterogeneity stemming from diverse computational capabilities among participating devices~\citep{li2020federated,wang2024comprehensive}.

Statistical heterogeneity manifests when local data distributions differ substantially across clients, a phenomenon commonly referred to as client drift~\citep{karimireddy2020scaffold, wang2026nonstationary}.
This heterogeneity can take multiple forms, including covariate shift where feature distributions vary while label distributions remain consistent, concept drift where the relationship between features and labels changes across clients, and quantity skew where data volumes differ dramatically~\citep{li2022federated,wang2026nonstationary}.
The consequences are severe: gradient updates computed on heterogeneous local data may point in conflicting directions, degrading the global model's convergence rate and final accuracy.
Furthermore, features learned by shared model components must simultaneously adapt to heterogeneous local distributions while serving the global collaborative objective, creating an inherent optimization conflict that complicates the training process~\citep{wang2026fedrda}.

System heterogeneity presents an orthogonal but equally critical challenge. In realistic FL deployments, participating devices exhibit significant disparities in computational power, memory capacity, and communication bandwidth~\citep{diao2021heterofl,caldas2018leaf}.
Resource-constrained clients such as mobile phones, IoT sensors, and edge devices may be unable to train or even store full-sized neural network models, leading to their exclusion from the federation.
This exclusion is not merely inefficient---it fundamentally biases the learned model toward data distributions present only on more capable devices, undermining the core premise of collaborative learning.
Recent surveys emphasize that addressing these disparities in client computational capabilities alongside non-IID local data distributions is essential for both personalization of local models and generalization of the global model~\citep{xu2026flsc,wang2024comprehensive}.

Existing approaches have addressed these challenges largely in isolation. For statistical heterogeneity, methods such as FedProx~\citep{li2020fedprox} add proximal regularization to limit local model drift, while SCAFFOLD~\citep{karimireddy2020scaffold} employs control variates to correct gradient bias.
More recent work explores representation decoupling strategies that explicitly separate global-shared features from client-specific representations to mitigate interference between collaborative and personalized learning objectives~\citep{wang2026fedrda,collins2021exploiting}.
Divergence-aware aggregation mechanisms have also been proposed to dynamically adjust client contributions based on semantic divergence, preventing critical task information from being diluted in highly heterogeneous scenarios~\citep{wang2026fedrda,xu2026flsc}.
For system heterogeneity, sub-model training approaches such as HeteroFL~\citep{diao2021heterofl} and FjORD~\citep{horvath2021fjord} allow resource-constrained clients to train smaller subsets of the global model.
However, a fundamental limitation of such sub-model partitioning is that neurons not selected by any client remain unupdated during aggregation, potentially degrading global model quality~\citep{xu2026flsc}.
Despite this progress, a critical gap remains: no existing method jointly addresses both heterogeneity challenges in an \emph{adaptive} and \emph{principled} manner.
Uniform sub-model allocation ignores the varying degrees of data heterogeneity across clients, while heterogeneity-aware optimization methods assume all clients can train full models.

This paper investigates that gap through Heterogeneity-Aware Sub-model Federated Learning (HAS-FL), an adaptive capacity-allocation framework, and asks a question prior to any such design: can client data heterogeneity be estimated at all from the signals a sub-model server observes?
The motivating hypothesis is that clients with highly non-IID data require larger model capacity to capture their distinctive local patterns, while clients with data more representative of the global distribution can effectively train smaller sub-models.
This intuition aligns with recent findings that adaptive mechanisms outperform static allocation strategies in heterogeneous environments~\citep{wang2026fedrda,xu2026flsc,li2026flexchill}.
Our analysis, however, shows that operationalizing this hypothesis is far harder than it appears: the observable signal is dominated by capacity itself.
We operationalize this insight through normalized gradient divergence---the squared norm of the difference between local and global updates, normalized by the magnitude of the global update---as an observable proxy for data heterogeneity that can be computed without additional privacy cost.

Our skepticism is grounded in a recurring pattern: careful, budget-matched evaluation has repeatedly levelled or overturned the claimed advantages of adaptive machinery.
In centralized learning, well-tuned SGD matches or exceeds adaptive optimizers~\citep{wilson2017marginal}, and a large-scale neutral comparison of GAN variants found no consistent winner once tuning budgets were equalized~\citep{lucic2018gans}.
In federated learning specifically, heterogeneity-aware re-evaluation on large-scale smartphone data erodes the reported gains of state-of-the-art aggregation and compression schemes~\citep{yang2021characterizing}, and an empirical study spanning some 1,500 configurations shows device and behavioral heterogeneity silently driving both model quality and fairness~\citep{abdelmoniem2023comprehensive}.
This paper applies the same discipline to adaptive sub-model allocation.

The contributions of this work are threefold.
First, we report a systematic analysis---validated against ground-truth label-distribution divergence on reproducible partitions---showing that update-divergence estimates of client heterogeneity are dominated by capacity heterogeneity: small-capacity clients diverge because their sub-models represent the task differently, not because their data is more skewed. The confound persists across two de-biased estimators, two datasets, and all seeds, and to our knowledge has not been documented before.
Second, we identify and analyze a failure mode of capped sub-model allocation---parameters left uncovered by every client remain at their initialization and progressively degrade the global model---and introduce a coverage guarantee, together with coordinate-wise aggregation, that eliminates it; the same mechanism explains the previously unexplained collapse of uniform allocation.
Third, through a unified multi-seed evaluation on CIFAR-10, EMNIST, and a naturally partitioned Shakespeare benchmark, we show that a matched-budget random control performs no differently from heterogeneity-aware allocation on the image benchmarks and better than it on the text benchmark, and that sub-model training as a whole gives up a substantial accuracy margin against full-model training in exchange for its reduced cost. The benefits that adaptive allocation does deliver come from capacity budgeting and parameter coverage rather than from heterogeneity awareness.

An implementation of every method and experiment reported here, together with the configuration files needed to reproduce each table and figure, is available at \url{https://github.com/amoayedikia/hasfl}.

\section{Related Work}
\label{sec:related_work}

Federated learning research has produced a rich landscape of methods addressing the challenges of distributed, privacy-preserving machine learning.
We organize the most relevant prior work into four categories: federated optimization under heterogeneity, system-heterogeneous federated learning, personalized federated learning, and communication-efficient approaches.

\subsection{Federated Optimization Under Statistical Heterogeneity}

The foundational FedAvg algorithm~\citep{mcmahan2017communication} established the basic framework of local stochastic gradient descent (SGD) followed by model averaging, demonstrating empirical success in independent and identically distributed (IID) settings but exhibiting degraded performance under statistical heterogeneity.
Subsequent work has sought to understand and mitigate this degradation through various mechanisms.
FedProx~\citep{li2020fedprox} addresses client drift by adding a proximal term to the local objective, constraining local updates to remain close to the global model.
While effective in moderately heterogeneous settings, this approach uniformly penalizes all clients regardless of their actual divergence from the global distribution.
SCAFFOLD~\citep{karimireddy2020scaffold} takes a different approach by introducing control variates that estimate and correct for the drift between local and global gradients.
This method achieves variance reduction and improved convergence rates but requires additional communication of control variates and assumes clients can maintain state across rounds.
Recent advances have explored more sophisticated mechanisms for handling heterogeneity.
FedRDA~\citep{wang2026fedrda} proposes representation decoupling combined with divergence-aware aggregation, explicitly separating global-shared and client-specific feature representations while dynamically adjusting client contributions based on semantic divergence measured through task vectors.
This work demonstrates that static weighting mechanisms based on sample size cannot adequately address scenarios where critical task information may be diluted across highly heterogeneous clients.
Similarly, FLex\&Chill~\citep{li2026flexchill} introduces temperature scaling during local training, where lower temperatures amplify gradient magnitudes and accelerate convergence under non-IID conditions.
Their theoretical analysis shows that gradient magnitude scales inversely with temperature, providing a complementary optimization technique to our capacity-based approach.
The challenge of non-stationary data in federated settings has also received attention.
\citet{wang2026nonstationary} provide a comprehensive treatment distinguishing between covariate shift, concept drift, and client drift, offering formal definitions and visualization of how these phenomena manifest in the loss landscape.
Their analysis reveals that existing methods often conflate these distinct sources of heterogeneity, potentially applying inappropriate corrections.
Our work differs from these approaches by focusing on capacity allocation rather than gradient correction or aggregation weighting.
Rather than uniformly constraining or correcting all clients, HAS-FL adaptively determines how much model capacity each client should train based on observable heterogeneity measures, addressing both statistical and resource heterogeneity simultaneously.

\subsection{System-Heterogeneous Federated Learning}

System heterogeneity in federated learning arises from the diverse computational capabilities of participating devices.
Early approaches addressed this through client selection strategies that exclude slower devices, but this introduces bias toward data present only on more capable clients.
HeteroFL~\citep{diao2021heterofl} pioneered sub-model training by allowing clients to train width-reduced versions of the global model according to their computational capacity.
The method aggregates sub-models through channel-wise averaging, enabling heterogeneous participation while maintaining a single global model.
FjORD~\citep{horvath2021fjord} extended this approach using ordered dropout, training nested sub-models that share parameters hierarchically.
Both methods demonstrate that partial model training can enable broader participation without catastrophic accuracy loss.
However, these approaches employ fixed or random sub-model allocation strategies that do not account for data heterogeneity.
Recent analysis reveals a fundamental limitation: when sub-model partitioning is employed, neurons that are not selected by any client during a communication round remain unupdated, potentially degrading global model quality over time~\citep{xu2026flsc}.
This observation motivates the heterogeneity-aware allocation strategy we study, which would give clients with more distinctive data larger sub-models; Section~\ref{subsec:rq2} examines whether the required estimate is obtainable in the first place.
FLSC~\citep{xu2026flsc} addresses model heterogeneity through semantic-guided aggregation with cascading heterogeneity compensation, where lightweight broad models are sequentially adapted for knowledge transfer across non-IID clients.
While effective, this approach requires consistent model structures within client clusters and involves additional computational overhead from the cascading compensation mechanism.
PFedGate~\citep{chen2023efficient} introduces learnable gating mechanisms for structured block sparsity, allowing personalized sparse architectures that adapt to local data characteristics.
This architectural personalization approach is complementary to our capacity-based allocation.
The method we analyze, HAS-FL, extends these schemes by coupling sub-model allocation to heterogeneity estimation---the natural next step suggested in this literature, and the one this paper puts to the test.
Rather than treating system and data heterogeneity as separate concerns, the method uses gradient divergence as a signal to inform capacity decisions, allocating larger sub-models to clients whose data appears to deviate more from the global distribution---the premise Section~\ref{subsec:rq2} tests directly.

A parallel systems literature treats device heterogeneity as a scheduling and selection problem while keeping every client's model full-size.
Oort selects participants by a utility score that deliberately blends statistical utility (local training loss) with system utility (measured latency)~\citep{lai2021oort}, and PyramidFL refines this joint profiling at finer granularity~\citep{li2022pyramidfl}; AutoFL schedules participants and per-device execution settings for energy efficiency~\citep{kim2021autofl}, and FedBalancer selects informative samples under device-speed-aware deadlines~\citep{shin2022fedbalancer}.
Because these systems train equal-size models, their published results are untouched by sub-model effects---but their utility signals already fuse data and capacity by design.
A second group sizes the sub-model itself from training-derived signals: FedMP chooses per-worker pruning ratios with a capability-driven bandit~\citep{jiang2022fedmp}, Hermes prunes channels by local-data importance~\citep{li2021hermes}, FIARSE composes each client's sub-model from parameter-magnitude importance up to the device budget~\citep{wu2024fiarse}, and FedLPS explicitly learns per-client sparse ratios from the superimposed effect of device capability and non-IID data~\citep{xue2025fedlps}.
By contrast, FedRolex rotates which slice trains purely by round index---a capacity-only rule involving no data-derived signal~\citep{alam2022fedrolex}.
Our analysis (Section~\ref{subsec:rq2}) shows why this distinction matters: once sub-model width varies with device capacity, update-based signals conflate data and capacity, and the data component becomes unidentifiable.

\subsection{Personalized Federated Learning}

Personalized federated learning (pFL) aims to produce models tailored to individual clients rather than a single global model.
This paradigm acknowledges that in highly heterogeneous settings, a one-size-fits-all model may underperform compared to client-specific adaptations.
Per-FedAvg~\citep{fallah2020personalized} applies model-agnostic meta-learning (MAML) principles to federated learning, treating the global model as an initialization for rapid local adaptation.
Ditto~\citep{li2021ditto} maintains separate global and local models with regularization connecting them, while FedRep~\citep{collins2021exploiting} explicitly partitions the model into shared representation layers and personalized classifier heads.
These methods demonstrate the value of structured personalization but require all clients to train full-sized models.
More recent work has explored sophisticated representation learning strategies for personalization.
pFedSD~\citep{jin2022personalized} employs self-knowledge distillation to prevent catastrophic forgetting during local adaptation, while pFedKT~\citep{zhang2023fedkd} introduces dual-transfer mechanisms combining hypernetworks with contrastive learning for improved knowledge transfer.
FedCP~\citep{zhang2023fedcp} separates feature information through conditional policies, and FedRDA~\citep{wang2026fedrda} uses variational contrastive log-ratio upper bound (vCLUB) to minimize mutual information between global and personalized representations, enforcing clean separation.
Prototype-based methods offer another avenue for personalization under heterogeneity. FedProto~\citep{tan2022fedproto} aggregates class prototypes rather than model parameters, enabling knowledge sharing even across architecturally heterogeneous clients.
ProtoFedGAN~\citep{sun2026protofedgan} extends this concept by combining prototype learning with generative adversarial networks for enhanced representation learning in non-IID settings.
Heterogeneity-aware allocation would, in principle, provide implicit personalization by giving clients with distinctive data larger sub-models; our results indicate that the premise this rests on does not hold under capacity heterogeneity.
This capacity-based personalization is complementary to explicit representation separation techniques and could be combined with them in future work.

\subsection{Communication Efficiency and One-Shot Federated Learning}

Communication efficiency is critical for practical FL deployment, as iterative communication between clients and server can become prohibitively expensive.
Various approaches have been proposed to reduce communication overhead while maintaining model quality.
Gradient compression techniques reduce the size of transmitted updates through quantization~\citep{alistarh2017qsgd}, sparsification~\citep{aji2017sparse}, or low-rank approximation~\citep{vogels2019powersgd}.
These methods are orthogonal to sub-model training and can be combined with HAS-FL for additional communication savings.
One-shot federated learning represents the extreme end of communication efficiency, aiming to complete model training in a single communication round.
Recent surveys~\citep{zhang2026oneshot} categorize approaches into statistical information-based methods, knowledge distillation-based methods, and advanced aggregation techniques.
FedDistr~\citep{ma2026feddistr} provides theoretical foundations for when one-shot FL can succeed, introducing the concept of entangled coefficients to measure distribution overlap between clients.
Their analysis shows that near-disentangled distributions enable one-shot learning with probabilistic utility guarantees.
However, one-shot FL fundamentally trades convergence guarantees for communication efficiency---the performance ceiling depends entirely on the diversity and accuracy of single-round contributions without possibility of iterative refinement~\citep{zhang2026oneshot}.
Sub-model training occupies a middle ground: by enabling resource-constrained clients to participate at reduced width, it lowers per-round computation while retaining the multi-round refinement that ensures convergence.
This efficiency is obtained without sacrificing the iterative optimization that distinguishes federated learning from ensemble methods; whether it can additionally be made heterogeneity-aware is the question we take up.

\section{The Method Under Study}
\label{sec:method}

This section specifies HAS-FL, the adaptive sub-model method whose premises this paper tests. We present it in full because the analysis in Section~\ref{sec:experiments} depends on the exact form of its heterogeneity estimator, its allocation rule, and its aggregation scheme; the design follows the natural extension of resource-only sub-model training that recent literature has repeatedly suggested.
We begin by formalizing the problem setting in Section~\ref{subsec:problem}, then introduce our heterogeneity measure in Section~\ref{subsec:heterogeneity_measure}, describe the complete algorithm with its adaptive allocation mechanism in Section~\ref{subsec:allocation}, and finally analyze communication and computation complexity in Section~\ref{subsec:complexity}.

\subsection{Problem Formulation}
\label{subsec:problem}

Consider a federated learning system with $N$ clients collaboratively training a shared model.
Each client $i \in [N] := \{1, 2, \ldots, N\}$ possesses a local dataset $\cD_i$ drawn from a potentially distinct distribution $P_i$, and the global objective is to minimize the aggregate loss:
\begin{equation}
\label{eq:global_objective}
\min_{\w \in \R^d} F(\w) := \frac{1}{N} \sum_{i=1}^{N} f_i(\w), \quad \text{where} \quad f_i(\w) := \E_{\xi \sim P_i}[\ell(\w; \xi)],
\end{equation}
with $\ell(\w; \xi)$ denoting the loss function evaluated at parameters $\w$ on data sample $\xi$, and $d$ representing the model dimensionality.
In practical federated deployments, clients exhibit heterogeneity along two orthogonal dimensions.
First, \emph{resource heterogeneity} manifests as varying computational and memory capacities across devices.
We formalize this through a capacity constraint $p_i^{\max} \in (0, 1]$ for each client $i$, representing the maximum fraction of model parameters that client $i$ can store and update. Second, \emph{data heterogeneity} arises from non-identical local distributions $P_i \neq P_j$ for $i \neq j$, causing local objectives $f_i$ to diverge from the global objective $F$.

The sub-model training paradigm addresses resource heterogeneity by having each client $i$ train only a subset of parameters determined by a binary mask $\m_i \in \{0, 1\}^d$.
Following the nested (ordered) sub-model construction of HeteroFL~\citep{diao2021heterofl} and FjORD~\citep{horvath2021fjord}, we instantiate $\m_i$ as a \emph{structured width mask}: a client with capacity $p_i$ trains the leading $\lceil p_i C_\ell \rceil$ channels of every layer $\ell$, together with the corresponding input connections, so that sub-models of increasing capacity are nested. Here $p_i := \|\m_i\|_0/d$ denotes the fraction of parameters updated by client $i$.
The masked local update becomes:
\begin{equation}
\label{eq:masked_update}
\w_i^{(t+1)} = \w^{(t)} - \eta \cdot \m_i^{(t)} \odot \nabla f_i(\m_i^{(t)} \odot \w^{(t)}; \xi_i^{(t)}),
\end{equation}
where $\odot$ denotes element-wise multiplication and $\eta$ is the learning rate.

The central challenge we address is: \emph{How should the sub-model allocation $\{p_i\}_{i=1}^N$ be determined when clients exhibit both resource constraints and heterogeneous data distributions?} Existing approaches either ignore data heterogeneity when allocating sub-models or treat resource and data heterogeneity independently.
The premise under test is that these two factors cannot be treated independently: the clients that would benefit most from additional capacity are not necessarily those that possess it.
The allocation rule we study, formalized in Equation~\eqref{eq:allocation_rule}, distributes capacity according to heterogeneity estimates while respecting the constraints $p_i^{\max}$.

\subsection{Heterogeneity Measure}
\label{subsec:heterogeneity_measure}

To enable adaptive sub-model allocation, we require a quantitative measure of each client's data heterogeneity that is both theoretically grounded and practically computable.
The measure under study is the \emph{normalized gradient divergence}.

\begin{definition}[Normalized Gradient Divergence]
\label{def:heterogeneity}
For client $i$ at round $t$, the normalized gradient divergence is defined as:
\begin{equation}
\label{eq:heterogeneity}
H_i^{(t)} := \frac{\|\nabla f_i(\w^{(t)}) - \nabla F(\w^{(t)})\|^2}{\|\nabla F(\w^{(t)})\|^2 + \epsilon},
\end{equation}
where $\epsilon > 0$ is a small constant ensuring numerical stability.
\end{definition}

This measure captures how much client $i$'s local gradient deviates from the global gradient direction, normalized by the global gradient magnitude.
Clients with $H_i^{(t)} \approx 0$ have data distributions aligned with the population, while large $H_i^{(t)}$ indicates significant distributional shift.
Importantly, this quantity directly appears in convergence bounds for federated optimization~\citep{li2020convergence,karimireddy2020scaffold}, making it theoretically meaningful for guiding algorithmic decisions.
In practice, the server cannot directly compute $H_i^{(t)}$ as it lacks access to local gradients.
However, it can be estimated from the observable client updates.
After each communication round, the server receives updates $\Delta_i^{(t)} := \w_i^{(t+1)} - \w^{(t)}$ from participating clients.
The estimator under study is:
\begin{equation}
\label{eq:heterogeneity_estimate}
\hat{H}_i^{(t)} := \frac{\|\m_i^{(t)} \odot (\Delta_i^{(t)} - \bar{\Delta}^{(t)})\|^2}{\|\m_i^{(t)} \odot \bar{\Delta}^{(t)}\|^2 + \epsilon},
\end{equation}
where $\bar{\Delta}^{(t)}$ denotes the coordinate-wise aggregated update computed by the server (Algorithm~\ref{alg:hasfl}, line 18) over the set $\cS^{(t)}$ of clients participating in round $t$.
This estimator leverages the relationship between gradients and parameter updates under gradient descent and serves as a practical server-side proxy for $H_i^{(t)}$; its absolute scale depends on the learning rate and the number of local steps, but only the \emph{relative} ordering of clients enters the allocation rule. Restricting the computation to the coordinates in $\m_i^{(t)}$ is essential: without this restriction, every coordinate a low-capacity client did not train would contribute the full magnitude of $\bar{\Delta}^{(t)}$ to its divergence, systematically inflating the heterogeneity estimates of low-capacity clients and creating a spurious feedback loop between capacity and estimated heterogeneity.
Whether \emph{any} update-based statistic can recover data heterogeneity under capacity heterogeneity is an empirical question; Section~\ref{subsec:rq2} answers it with ground-truth validation.
To reduce variance in the heterogeneity estimates, we employ exponential moving average smoothing:
\begin{equation}
\label{eq:smoothed_heterogeneity}
\tilde{H}_i^{(t)} := \beta \cdot \tilde{H}_i^{(t-1)} + (1 - \beta) \cdot \hat{H}_i^{(t)},
\end{equation}

where $\beta \in [0, 1)$ is the smoothing parameter controlling the trade-off between responsiveness and stability, and $\tilde{H}_i^{(0)} = 1$ for all clients as the initial estimate before any observations are available.
Larger $\beta$ values produce more stable estimates but slower adaptation to changing heterogeneity patterns.
To prevent unbounded accumulation of the exponential moving averages over long training runs, we apply periodic normalization every $T_{\text{norm}}$ rounds, rescaling all estimates to maintain bounded values while preserving relative ordering among clients.

\subsection{Adaptive Sub-model Allocation}
\label{subsec:allocation}

The central mechanism of HAS-FL, and the object of our analysis, is an adaptive allocation rule that assigns sub-model capacities from both resource constraints and estimated data heterogeneity.
Our allocation rule is motivated by the following principle: \emph{clients with higher data heterogeneity require larger sub-model capacity to adequately capture their local data distribution, subject to their resource constraints}.

\begin{definition}[HAS-FL Allocation Rule]
\label{def:allocation}
Given resource constraints $\{p_i^{\max}\}_{i=1}^N$, heterogeneity estimates $\{\tilde{H}_i^{(t)}\}_{i=1}^N$, base capacity $p_{\min} \in (0, 1)$, and sensitivity parameter $\gamma > 0$, the adaptive allocation for client $i$ at round $t$ is:
\begin{equation}
\label{eq:allocation_rule}
p_i^{(t)} := \min\left(p_i^{\max}, \; p_{\min} + \gamma \cdot \frac{\tilde{H}_i^{(t)}}{\bar{H}^{(t)} + \epsilon}\right),
\end{equation}
where $\bar{H}^{(t)} := \frac{1}{N}\sum_{j=1}^{N} \tilde{H}_j^{(t)}$ is the average heterogeneity across all clients.
\end{definition}

The allocation rule in \eqref{eq:allocation_rule} has several properties that motivate it a priori. First, the minimum operator ensures that resource constraints are never violated, respecting the physical limitations of each device.
Second, the term $p_{\min}$ guarantees a baseline participation level for all clients, preventing any client from being effectively excluded.
Third, the heterogeneity-dependent term $\gamma \cdot \tilde{H}_i^{(t)}/\bar{H}^{(t)}$ allocates additional capacity proportionally to relative heterogeneity, ensuring that clients deviating most from the global distribution receive larger sub-models.
The normalization by $\bar{H}^{(t)}$ makes the allocation robust to the absolute scale of heterogeneity measures across different training stages.
The sensitivity parameter $\gamma$ controls the strength of heterogeneity-aware allocation.
When $\gamma = 0$, all clients receive identical base capacity $p_{\min}$ (subject to resource constraints), recovering uniform allocation.
As $\gamma$ increases, the allocation becomes more responsive to heterogeneity differences.

The rule in \eqref{eq:allocation_rule} alone harbors a subtle failure mode. If every client---including those able to train the full model---is capped below full width, the coordinates beyond the largest allocation are trained by \emph{no} client: they remain at their random initialization yet continue to participate in inference, and the global model degrades progressively as the trained coordinates drift away from them. Related observations about unselected parameters have been made for static sub-model schemes~\citep{xu2026flsc,alam2022fedrolex}. HAS-FL therefore augments the allocation rule with a \emph{coverage guarantee}: after each adaptation step, clients whose resource constraint equals the largest constraint in the federation are assigned their full capacity $p_i^{\max}$. This ensures that every parameter of the global model keeps receiving updates whenever such a client is sampled, while leaving the heterogeneity-aware allocation of all remaining clients untouched. Section~\ref{subsec:ablation_coverage} quantifies the effect: without the guarantee, accuracy collapses; with it, HAS-FL matches static full-coverage allocation at lower average capacity.

The failure mode admits an exact statement.
\begin{proposition}[Frozen coordinates under capped allocation]
\label{prop:frozen}
Suppose every allocation satisfies $p_i^{(t)} \le \bar{p} < 1$ for all clients $i$ and rounds $t$, with the nested width masks of Algorithm~\ref{alg:mask}. Then every coordinate outside the leading $\bar{p}$-fraction slice of its layer satisfies $[\w^{(T)}]_j = [\w^{(0)}]_j$ for every $T$: it remains at its random initialization for the entire run while participating in every forward pass.
\end{proposition}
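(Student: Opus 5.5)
The proof is an induction on the round index $T$. The claim is that the set of coordinates outside the leading $\bar p$-fraction slice of each layer, call it $U$, is never written to by any step of Algorithm~\ref{alg:hasfl}.

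\textbf{Step 1: masks vanish on $U$.} Under the nested structured width masks of Algorithm~\ref{alg:mask}, a client with capacity $p_i^{(t)}$ trains only the leading $\lceil p_i^{(t)} C_\ell\rceil$ channels of layer $\ell$, together with their input connections. Monotonicity of $\lceil\cdot\rceil$ and $p_i^{(t)}\le\bar p$ then give $\lceil p_i^{(t)} C_\ell\rceil\le\lceil \bar p C_\ell\rceil$. Nestedness therefore yields $\m_i^{(t)}\le \m_{\bar p}$ coordinatewise, where $\m_{\bar p}$ is the mask of the $\bar p$-slice. Hence $[\m_i^{(t)}]_j=0$ for every $j\in U$, every client $i$, and every round $t$. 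Here I define the ``leading $\bar p$-fraction slice'' as precisely the support of $\m_{\bar p}$, so that rounding is handled by the definition.

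\textbf{Step 2: local updates are zero on $U$.} By the masked local update~\eqref{eq:masked_update}, each local step changes $\w$ only by $-\eta\,\m_i^{(t)}\odot(\cdot)$. This holds for every local step, since the mask is fixed within a round. Summing over local steps gives $[\Delta_i^{(t)}]_j=0$ for $j\in U$. This step needs $\m_i^{(t)}$ to multiply the whole update, including any momentum or weight decay. The plan is to state this as the operative reading of~\eqref{eq:masked_update}, since the algorithm applies the mask to the update.

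\textbf{Step 3: aggregation preserves $U$ (main obstacle).} This is where care is needed. Coordinate-wise aggregation (line 18 of Algorithm~\ref{alg:hasfl}) averages $\Delta_i^{(t)}$ at coordinate $j$ only over the clients with $[\m_i^{(t)}]_j=1$. For $j\in U$ this set is empty, so I must check the algorithm's convention for zero coverage. Two conventions are possible. If the coverage count has an $\epsilon$ or $\max(\cdot,1)$ guard, the aggregate is $0/\max(0,1)=0$. If the algorithm instead leaves uncovered coordinates unchanged, the conclusion is direct. Under plain averaging over $\cS^{(t)}$, the aggregate is also $0$, because every summand vanishes by Step 2. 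In all cases $[\bar\Delta^{(t)}]_j=0$, so $[\w^{(t+1)}]_j=[\w^{(t)}]_j$.

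\textbf{Conclusion.} Induction from $T=0$ gives $[\w^{(T)}]_j=[\w^{(0)}]_j$ for all $T$. The remaining clause, that these coordinates participate in every forward pass, follows because global evaluation uses the full unmasked $\w^{(T)}$. I note this as a remark rather than part of the invariance argument.
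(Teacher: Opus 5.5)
Your proof is correct and follows the same route as the paper's. Nestedness gives $[\m_i^{(t)}]_j=0$ outside the $\bar p$-slice for every client and round, the coordinate-wise aggregation then yields $[\bar{\Delta}^{(t)}]_j=0$, and induction from $\w^{(0)}$ finishes. Your Step 2 and the case analysis over aggregation conventions are harmless but unnecessary: line 18 of Algorithm~\ref{alg:hasfl} explicitly sets $[\bar{\Delta}^{(t)}]_j = 0$ whenever no participant covers $j$ (and its numerator already carries the factor $[\m_i^{(t)}]_j$), which is exactly the fact the paper's proof uses.
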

\begin{proof}
By nestedness, $[\m_i^{(t)}]_j = 0$ for every client $i$ whenever $j$ lies outside the leading $\bar{p}$ slice of its layer. The coordinate-wise aggregation rule (Algorithm~\ref{alg:hasfl}, line 18) sets $[\bar{\Delta}^{(t)}]_j = 0$ whenever no participant covers $j$, hence $[\w^{(t+1)}]_j = [\w^{(t)}]_j$ for all $t$, and induction from $\w^{(0)}$ completes the argument.
\end{proof}
The proposition turns the empirical collapse into a structural fact: an allocation policy avoids permanently frozen coordinates only if the union of client masks keeps covering every coordinate, and under nested masks the cheapest way to enforce this is to keep the largest-capacity clients at full width---precisely the coverage guarantee.

We now present the complete HAS-FL algorithm as outlined in Algorithm~\ref{alg:hasfl}, which integrates the heterogeneity estimation and adaptive allocation mechanisms into a federated optimization procedure.
The algorithm operates in rounds, with periodic updates to heterogeneity estimates and sub-model allocations.

The algorithm proceeds as follows.
At each round $t$, the server samples a subset $\cS^{(t)}$ of clients to participate (line 5).
For each selected client $i$, the server extracts the nested sub-model consisting of the leading $p_i^{(t)}$ fraction of each layer's channels (line 7).
The client performs $K$ local SGD steps on its local data (lines 9--13) and returns the parameter update $\Delta_i^{(t)}$ to the server (lines 14--15).
The server aggregates the received updates \emph{coordinate-wise} (lines 17--18): each parameter is averaged over exactly the clients that trained it, weighted by their local sample counts.
This replaces the scalar normalization used in earlier sub-model schemes; coordinates covered by few clients are neither diluted nor over-amplified, which is essential when allocation fractions differ across clients.
Every $T_{\text{adapt}}$ rounds, the server updates heterogeneity estimates and reallocates sub-model capacities (lines 19--29).
The heterogeneity estimate $\hat{H}_i^{(t)}$ is computed from the divergence between client $i$'s update and the aggregated update, restricted to the coordinates that client $i$ actually trained (line 21), then smoothed using an exponential moving average (line 22).
Sub-model allocations are updated according to the allocation rule in Definition~\ref{def:allocation}, followed by the coverage guarantee that keeps the highest-capacity clients at full width (lines 25--28).
Algorithm~\ref{alg:mask} details the nested sub-model extraction procedure.

\begin{algorithm}[!htbp]
\caption{HAS-FL: Heterogeneity-Aware Adaptive Sub-model Federated Learning}
\label{alg:hasfl}
\begin{algorithmic}[1]
\Require Number of clients $N$, resource constraints $\{p_i^{\max}\}_{i=1}^N$, total rounds $T$, local steps $K$, learning rate $\eta$, adaptation interval $T_{\text{adapt}}$, smoothing parameter $\beta$, base capacity $p_{\min}$, sensitivity $\gamma$, client sampling rate $s$
\Ensure Trained global model $\w^{(T)}$
\State Initialize global model $\w^{(0)}$
\State Initialize heterogeneity estimates $\tilde{H}_i^{(0)} \gets 1$ for all $i \in [N]$
\State Initialize allocations $p_i^{(0)} \gets p_i^{\max}$ for all $i \in [N]$
\For{$t = 0, 1, \ldots, T-1$}
    \State $\cS^{(t)} \gets$ \textsc{SampleClients}$(N, s)$ \Comment{Sample subset of clients}
    \For{each client $i \in \cS^{(t)}$ \textbf{in parallel}}
        \State $\m_i^{(t)} \gets$ \textsc{ExtractSubmodel}$(d, p_i^{(t)})$ \Comment{Nested width mask: leading $p_i^{(t)}$ fraction of each layer}
        \State $\w_i^{(t,0)} \gets \m_i^{(t)} \odot \w^{(t)}$ \Comment{Extract sub-model}
        \For{$k = 0, 1, \ldots, K-1$}
            \State Sample mini-batch $\xi_i^{(t,k)} \sim \cD_i$
            \State $\g_i^{(t,k)} \gets \nabla \ell(\w_i^{(t,k)}; \xi_i^{(t,k)})$
            \State $\w_i^{(t,k+1)} \gets \w_i^{(t,k)} - \eta \cdot \m_i^{(t)} \odot \g_i^{(t,k)}$
        \EndFor
        \State $\Delta_i^{(t)} \gets \w_i^{(t,K)} - \m_i^{(t)} \odot \w^{(t)}$ \Comment{Compute masked update}
        \State Send $\Delta_i^{(t)}$ to server
    \EndFor
    \State \textbf{Server aggregation:}
    \State $[\bar{\Delta}^{(t)}]_j \gets \frac{\sum_{i \in \cS^{(t)}} n_i [\m_i^{(t)}]_j [\Delta_i^{(t)}]_j}{\sum_{i \in \cS^{(t)}} n_i [\m_i^{(t)}]_j}$ if $\sum_{i} [\m_i^{(t)}]_j > 0$, else $0$ \Comment{Coordinate-wise; $n_i$: sample count}
    \State $\w^{(t+1)} \gets \w^{(t)} + \bar{\Delta}^{(t)}$
    \If{$t \mod T_{\text{adapt}} = 0$} \Comment{Periodic adaptation}
        \For{each client $i \in \cS^{(t)}$}
            \State $\hat{H}_i^{(t)} \gets \|\m_i^{(t)} \odot (\Delta_i^{(t)} - \bar{\Delta}^{(t)})\|^2 / (\|\m_i^{(t)} \odot \bar{\Delta}^{(t)}\|^2 + \epsilon)$
            \State $\tilde{H}_i^{(t)} \gets \beta \cdot \tilde{H}_i^{(t-T_{\text{adapt}})} + (1-\beta) \cdot \hat{H}_i^{(t)}$
        \EndFor
        \State $\bar{H}^{(t)} \gets \frac{1}{N} \sum_{i=1}^{N} \tilde{H}_i^{(t)}$
        \For{each client $i \in [N]$}
            \State $p_i^{(t+1)} \gets \min\left(p_i^{\max}, \; p_{\min} + \gamma \cdot \tilde{H}_i^{(t)} / (\bar{H}^{(t)} + \epsilon)\right)$
        \EndFor
        \State $p_i^{(t+1)} \gets p_i^{\max}$ for all $i$ with $p_i^{\max} = \max_j p_j^{\max}$ \Comment{Coverage guarantee}
    \EndIf
\EndFor
\State \Return $\w^{(T)}$
\end{algorithmic}
\end{algorithm}

\begin{algorithm}[!htbp]
\caption{\textsc{ExtractSubmodel}: Nested Sub-model Extraction}
\label{alg:mask}
\begin{algorithmic}[1]
\Require Layer widths $\{C_\ell\}_{\ell=1}^{L}$, capacity fraction $p$
\Ensure Binary mask $\m \in \{0,1\}^d$
\For{each layer $\ell = 1, \ldots, L$}
    \State $c_\ell \gets \lceil p \cdot C_\ell \rceil$ \Comment{Number of retained channels}
    \State Set $m_j = 1$ for the parameters of the leading $c_\ell$ channels of layer $\ell$ (and their input connections from the retained channels of layer $\ell{-}1$); $m_j = 0$ otherwise
\EndFor
\State \Return $\m = (m_1, \ldots, m_d)$ \Comment{Masks are nested: $p \le p'$ implies $\m \subseteq \m'$}
\end{algorithmic}
\end{algorithm}

Figure~\ref{fig:hasfl_overview} provides a graphical overview of the HAS-FL framework.
The server maintains a global model $\w^{(t)}$ and distributes client-specific sub-models based on adaptive allocations $\{p_i^{(t)}\}$.
Each client performs local training on its sub-model and returns updates $\Delta_i$ to the server.
The adaptation module, shown on the right, periodically estimates heterogeneity from update divergence, applies exponential smoothing, and computes new allocations that account for both resource constraints and data heterogeneity.
As illustrated in the example, Client 1 has high heterogeneity ($H_1 = 2.1$) but limited resources ($p_1^{\max} = 0.3$), so it receives the maximum allowed capacity.
Client 2 has low heterogeneity ($H_2 = 0.3$); the base rule would assign it a small sub-model, but since it is the highest-capacity client in the federation, the coverage guarantee keeps it at full width ($p_2^{(t)} = 0.8$) so that every parameter of the global model continues to receive updates.
Client $N$ has moderate heterogeneity ($H_N = 1.2$) and receives its maximum allowed capacity ($p_N^{(t)} = 0.5 = p_N^{\max}$).

\begin{figure*}[!htbp]
\centering
\begin{tikzpicture}[
    node distance=1.5cm and 2cm,
    server/.style={rectangle, draw=serverblue, fill=lightblue, thick, minimum width=2.5cm, minimum height=1cm, rounded corners=3pt},
    client/.style={rectangle, draw=clientgreen, fill=lightgreen, thick, minimum width=2cm, minimum height=0.9cm, rounded corners=3pt},
    process/.style={rectangle, draw=updateorange, fill=lightorange, thick, minimum width=2.3cm, minimum height=0.7cm, rounded corners=3pt, font=\small},
    arrow/.style={-{Stealth[length=2mm]}, thick},
    dashedarrow/.style={-{Stealth[length=2mm]}, thick, dashed},
    label/.style={font=\scriptsize},
]

\node[server] (server) {\textbf{Server}};
\node[above=0.1cm of server, font=\small\bfseries, color=serverblue] {Global Model $\w^{(t)}$};

\node[client, below left=2cm and 3cm of server] (client1) {Client 1};
\node[client, below=2cm of server] (client2) {Client 2};
\node[client, below right=2cm and 1.2cm of server] (client3) {Client $N$};
\node[below right=2cm and -0.2cm of server] (dots) {$\cdots$};
\node[below=0.25cm of client1, font=\scriptsize] (res1) {$p_1^{\max} = 0.3$};
\node[below=0.25cm of client2, font=\scriptsize] (res2) {$p_2^{\max} = 0.8$};
\node[below=0.25cm of client3, font=\scriptsize] (res3) {$p_N^{\max} = 0.5$};
\node[below=0.08cm of res1, font=\scriptsize, color=heterored] (het1) {$H_1 = 2.1$ (high)};
\node[below=0.08cm of res2, font=\scriptsize, color=clientgreen] (het2) {$H_2 = 0.3$ (low)};
\node[below=0.08cm of res3, font=\scriptsize, color=updateorange] (het3) {$H_N = 1.2$ (med)};
\node[below=0.08cm of het1, font=\scriptsize\bfseries] (alloc1) {$p_1^{(t)} = 0.3$};
\node[below=0.08cm of het2, font=\scriptsize\bfseries] (alloc2) {$p_2^{(t)} = 0.8$};
\node[below=0.08cm of het3, font=\scriptsize\bfseries] (alloc3) {$p_N^{(t)} = 0.5$};

\draw[arrow, serverblue] (server.south west) -- node[left, label, pos=0.4] {$\m_1 \odot \w$} (client1.north);
\draw[arrow, serverblue] (server.south) -- node[right, label, pos=0.4] {$\m_2 \odot \w$} (client2.north);
\draw[arrow, serverblue] ([xshift=0.3cm]server.south) -- node[right, label, pos=0.4] {$\m_N \odot \w$} (client3.north);
\draw[arrow, clientgreen] (client1.north east) -- node[above left, label, pos=0.6] {$\Delta_1$} ([xshift=-0.6cm]server.south);
\draw[arrow, clientgreen] (client2.north) -- node[left, label, pos=0.6] {$\Delta_2$} ([xshift=0cm]server.south);
\draw[arrow, clientgreen] (client3.north west) -- node[above right, label, pos=0.6] {$\Delta_N$} ([xshift=0.5cm]server.south);
\node[process, right=4.5cm of server] (estimate) {Estimate $\hat{H}_i^{(t)}$};
\node[process, below=0.6cm of estimate] (smooth) {Smooth $\tilde{H}_i^{(t)}$};
\node[process, below=0.6cm of smooth] (allocate) {Allocate $p_i^{(t+1)}$};
\draw[arrow] (server.east) -- (estimate.west);
\draw[arrow] (estimate.south) -- (smooth.north);
\draw[arrow] (smooth.south) -- (allocate.north);
\draw[dashedarrow] (allocate.west) -- +(-0.8cm,0) |- ([yshift=-0.2cm]server.east);

\node[above=0.25cm of estimate, font=\small\bfseries, color=updateorange] {Adaptation Module};
\begin{scope}[on background layer]
\node[fit=(estimate)(smooth)(allocate), draw=updateorange, dashed, rounded corners=5pt, inner sep=0.25cm] {};
\end{scope}

\end{tikzpicture}
\caption{Overview of the HAS-FL framework.}
\label{fig:hasfl_overview}
\end{figure*}

Figure~\ref{fig:allocation_visualization} illustrates how the adaptive allocation responds to different combinations of resource constraints and data heterogeneity.
The plot shows the allocation rule $p_i = \min(p_i^{\max}, p_{\min} + \gamma H_i/\bar{H})$ with the parameters used in our experiments, $p_{\min} = 0.4$ and $\gamma = 0.25$ (heterogeneity shown relative to the population average).
Each curve represents a different resource constraint level. The allocated capacity increases linearly with heterogeneity until hitting the resource constraint, at which point it plateaus.
This demonstrates the key principle: high-heterogeneity clients receive larger sub-models when resources permit, while resource-constrained clients are capped regardless of their heterogeneity level.

\begin{figure}[!htbp]
\centering
\begin{tikzpicture}
\begin{axis}[
    width=0.55\textwidth,
    height=0.42\textwidth,
    xlabel={Heterogeneity $H_i$},
    ylabel={Allocated Capacity $p_i$},
    xmin=0, xmax=3,
    ymin=0, ymax=1.1,
    legend pos=south east,
    legend style={font=\small},
    grid=both,
    grid style={line width=0.1pt, draw=gray!30},
    major grid style={line width=0.2pt, draw=gray!50},
]

\addplot[
    domain=0:3,
    samples=100,
    color=serverblue,
    thick,
] {min(1.0, 0.4 + 0.25*x)};
\addlegendentry{$p_i^{\max} = 1.0$}

\addplot[
    domain=0:3,
    samples=100,
    color=clientgreen,
    thick,
] {min(0.6, 0.4 + 0.25*x)};
\addlegendentry{$p_i^{\max} = 0.6$}

\addplot[
    domain=0:3,
    samples=100,
    color=heterored,
    thick,
] {min(0.3, 0.4 + 0.25*x)};
\addlegendentry{$p_i^{\max} = 0.3$}

\addplot[
    domain=0:3,
    samples=2,
    color=gray,
    dashed,
    thin,
] {0.4};
\addlegendentry{$p_{\min} = 0.4$}

\node[circle, fill=serverblue, inner sep=2pt] at (axis cs:2.4,1.0) {};
\node[circle, fill=clientgreen, inner sep=2pt] at (axis cs:1.5,0.6) {};
\node[circle, fill=heterored, inner sep=2pt] at (axis cs:2.0,0.3) {};

\end{axis}
\end{tikzpicture}
\caption{HAS-FL allocation rule under different resource constraints.}
\label{fig:allocation_visualization}
\end{figure}

\subsection{Communication and Computation Complexity}
\label{subsec:complexity}

We analyze the communication and computation costs of HAS-FL compared to standard federated learning approaches.
In each round, client $i$ receives a sub-model of expected size $p_i^{(t)} \cdot d$ parameters and sends back an update of the same size.
The total communication per round is:
\begin{equation}
\label{eq:comm_cost}
C_{\text{comm}}^{(t)} = 2 \sum_{i \in \cS^{(t)}} p_i^{(t)} \cdot d = 2d \sum_{i \in \cS^{(t)}} p_i^{(t)}.
\end{equation}
Compared to full-model training where $C_{\text{full}} = 2|\cS^{(t)}| \cdot d$, HAS-FL achieves communication reduction factor of $|\cS^{(t)}|/\sum_{i \in \cS^{(t)}} p_i^{(t)}$.
When average allocation $\bar{p} < 1$, this represents significant bandwidth savings, particularly important for resource-constrained edge devices.
The local computation cost for client $i$ scales with sub-model size as $\cO(p_i^{(t)} \cdot d \cdot K \cdot B)$, where $K$ is the number of local steps and $B$ is the batch size.
Resource-constrained clients with small $p_i^{\max}$ thus perform proportionally less computation, enabling their participation in the training process.
The adaptation module introduces additional server-side computation for heterogeneity estimation and allocation updates.
Computing $\hat{H}_i^{(t)}$ requires $\cO(d)$ operations per client, and the allocation rule evaluation is $\cO(N)$.
Since adaptation occurs every $T_{\text{adapt}}$ rounds, the amortized overhead per round is $\cO((d \cdot |\cS| + N)/T_{\text{adapt}})$, which is negligible compared to the aggregation cost of $\cO(d \cdot |\cS|)$.

Table~\ref{tab:cost_model} instantiates these costs for the three architectures used in our experiments.
Because width slicing shrinks \emph{both} the input and output channels of every layer, per-client cost scales quadratically rather than linearly with $p$: a quarter-width client trains only 6--7\% of the full model's parameters and multiply--accumulate operations, and a half-width client roughly 25\%.
This quadratic law is what makes sub-model training attractive for edge deployments---the weakest capacity tier pays an order of magnitude less computation and upload bandwidth than the strongest---but it is also the root of the estimation problem studied in Section~\ref{subsec:rq2}: sub-model updates differ across capacity tiers by an order of magnitude in dimensionality and norm, dwarfing any data-driven variation.

\begin{table}[!htbp]
\centering
\caption{Per-client cost as a function of sub-model width $p$.}
\label{tab:cost_model}
\footnotesize
\begin{tabular}{lcccc}
\toprule
Model & $p$ & Params (M) & MACs/sample (M) & Upload (MB) \\
\midrule
\multirow{4}{*}{CIFAR-10 CNN}
 & 0.25 & 0.21 & 10.0 & 0.82 \\
 & 0.50 & 0.82 & 39.2 & 3.26 \\
 & 0.75 & 1.83 & 87.4 & 7.32 \\
 & 1.00 & 3.25 & 154.9 & 13.00 \\
\midrule
\multirow{4}{*}{EMNIST CNN}
 & 0.25 & 0.21 & 7.5 & 0.85 \\
 & 0.50 & 0.83 & 29.7 & 3.31 \\
 & 0.75 & 1.85 & 66.6 & 7.40 \\
 & 1.00 & 3.28 & 118.2 & 13.10 \\
\midrule
\multirow{4}{*}{Shakespeare LSTM}
 & 0.25 & 0.06 & 4.1 & 0.24 \\
 & 0.50 & 0.22 & 16.1 & 0.86 \\
 & 0.75 & 0.47 & 35.9 & 1.88 \\
 & 1.00 & 0.82 & 63.6 & 3.30 \\
\bottomrule
\end{tabular}
\end{table}

\section{Experiments}
\label{sec:experiments}

We conduct extensive experiments to evaluate HAS-FL across diverse federated learning scenarios.
Our evaluation addresses three primary research questions: (1) Does heterogeneity-aware adaptive allocation improve convergence and final accuracy compared to resource-only allocation strategies?
(2) Can client-level data heterogeneity be estimated from the sub-model updates the server observes, and do the resulting allocations track it?
(3) How does HAS-FL compare against state-of-the-art federated optimization methods designed for statistical heterogeneity?
The following subsections present our experimental setup, followed by systematic investigation of each research question with evidence drawn from multiple benchmark datasets.

\subsection{Experimental Setup}
\label{subsec:setup}

We evaluate HAS-FL on two benchmark datasets that span different scales and heterogeneity characteristics.
The CIFAR-10 dataset \citep{krizhevsky2009learning} consists of 60,000 color images of size $32 \times 32$ pixels distributed across 10 mutually exclusive classes, with 50,000 images designated for training and 10,000 for testing.
We partition the training data across clients using a Dirichlet distribution with concentration parameter $\alpha = 0.3$, creating a challenging non-IID setting where each client possesses a highly skewed subset of the label space.
This dataset serves as our primary benchmark due to its widespread adoption in federated learning research, enabling direct comparison with existing methods while allowing controlled experimentation with varying degrees of label skew.
The EMNIST dataset comprises 697,932 grayscale images of handwritten characters distributed across 62 classes including digits, uppercase letters, and lowercase letters.
This represents a 14-fold increase in data volume and 6-fold increase in class count compared to CIFAR-10, testing scalability and performance under increased classification complexity.
We apply the same Dirichlet partitioning strategy ($\alpha = 0.3$) to maintain consistency with our CIFAR-10 experiments while evaluating whether HAS-FL's mechanisms generalize to substantially larger and more complex classification tasks.
To complement the label-skew heterogeneity induced by Dirichlet partitioning, we additionally evaluate on a benchmark with \emph{natural} client partitions: next-character prediction on the complete works of Shakespeare, following the LEAF benchmark's construction~\citep{caldas2018leaf}.
Each client corresponds to one speaking role, so heterogeneity arises from genuine stylistic and lexical differences between characters rather than from a synthetic sampling procedure; the twenty roles with the largest amount of dialogue form the federation.
Text is modeled at the character level (vocabulary of 95 symbols) with sequences of 80 characters predicting the following character, and the model is a two-layer LSTM with 256 hidden units per layer and an 8-dimensional character embedding.
Sub-model extraction for the LSTM slices the leading fraction of hidden units \emph{within each gate block}, preserving the recurrent structure of nested sub-models; the embedding and output bias are shared in full by every client.
Training uses $K=1$ local epoch, $\eta = 0.1$, and gradient-norm clipping at 5; all other protocol elements are identical to the image experiments.

Table~\ref{tab:datasets} summarizes the key characteristics of these datasets.

\begin{table}[!htbp]
\centering
\caption{Datasets used in the experiments.}
\label{tab:datasets}
\footnotesize
\begin{tabular}{lccccc}
\toprule
Dataset & Task & Classes & Train Size & Test Size & Input Dim \\
\midrule
CIFAR-10 & Image & 10 & 50,000 & 10,000 & $32 \times 32 \times 3$ \\
EMNIST & Image & 62 & 697,932 & 116,323 & $28 \times 28 \times 1$ \\
Shakespeare & Text & 95 & \multicolumn{2}{c}{20 role-clients} & $80$ chars \\
\bottomrule
\end{tabular}
\end{table}

The federated learning setup consists of $N=20$ clients, with 10 clients randomly sampled to participate in each communication round.
Resource heterogeneity is simulated by assigning each client a maximum capacity constraint $p_i^{\max} \in \{0.25, 0.5, 0.75, 1.0\}$ according to a realistic distribution where resource-constrained devices are more prevalent, yielding an average maximum capacity of $\bar{p}^{\max} = 0.525$.
Training proceeds for $T=200$ rounds with learning rate $\eta=0.01$, decay factor 0.995, and $K=5$ local epochs for CIFAR-10 (reduced to $K=1$ for EMNIST following standard practice for large federated datasets).
Local optimization uses SGD with momentum 0.9, weight decay $10^{-4}$, and gradient-norm clipping at 10. All experiments are repeated over three random seeds and reported as mean $\pm$ standard deviation.
The HAS-FL hyperparameters are set to $p_{\min} = 0.4$, $\gamma = 0.25$, and $\beta = 0.9$, with adaptation occurring every $T_{\text{adapt}} = 5$ rounds following a warmup period of 10 rounds, and the coverage guarantee of Section~\ref{subsec:allocation} enabled.
We set the normalization interval to $T_{\text{norm}} = 20$ rounds.

We compare HAS-FL against seven baseline algorithms spanning sub-model and full-model approaches.
Among sub-model methods, HeteroFL \citep{diao2021heterofl} assigns sub-model widths based solely on resource constraints without considering data heterogeneity, FjORD employs ordered dropout with knowledge distillation to train nested sub-networks, and the Uniform baseline assigns fixed allocation $p_i = \min(0.5, p_i^{\max})$ to all clients regardless of their characteristics.
A Random-budget control assigns each client a random time-varying allocation drawn to match HAS-FL's average capacity budget, isolating the contribution of heterogeneity-awareness from mere adaptivity.
Among full-model methods, FedAvg \citep{mcmahan2017communication} serves as the standard baseline, SCAFFOLD \citep{karimireddy2020scaffold} employs control variates to correct client drift, and FedProx \citep{li2020fedprox} adds proximal regularization to constrain local updates.
All methods use identical data partitions, resource constraints, participation ratios, and optimization hyperparameters to ensure fair comparison; in particular, SCAFFOLD is evaluated under the same $N=20$ client configuration as every other method.

\subsection{Heterogeneity-Aware vs. Resource-Only Allocation}
\label{subsec:rq1}

Our first research question examines whether incorporating data heterogeneity estimates into capacity allocation decisions improves performance compared to strategies that consider only resource constraints.
We address this by comparing HAS-FL against two resource-only approaches: the Uniform baseline, which assigns identical capacity to all clients within their resource limits, and FjORD, which assigns capacity tiers randomly from the available options without considering client-specific data characteristics.
This comparison isolates the contribution of heterogeneity-aware allocation by controlling for overall resource utilization.
Table~\ref{tab:rq1_comparison} presents the comparison, and its message is central to this paper's thesis: heterogeneity-aware allocation does \emph{not} separate from random tier assignment.
On CIFAR-10, HAS-FL reaches 74.5\% $\pm$ 5.6 final accuracy while FjORD reaches 75.5\% $\pm$ 3.8---statistically indistinguishable means---even though FjORD uses \emph{less} average capacity (0.41 vs.\ 0.52).
On EMNIST the pattern repeats: 77.7\% $\pm$ 4.6 for HAS-FL against 79.6\% $\pm$ 3.3 for FjORD at 0.40 average capacity.
Both coverage-preserving methods dominate the Uniform baseline, and the margin widens dramatically with task complexity: Uniform ends at 54.5\% $\pm$ 2.3 on CIFAR-10 but collapses to 9.4\% $\pm$ 3.8 on EMNIST---a fall of 70--75 points from its own peak of 82.2\% $\pm$ 1.4, to barely above the 1.6\% chance level of the 62-class task---confirming that the decisive design element is coverage, not the allocation criterion.
Following recent recommendations on evaluation practice, we emphasize final rather than peak accuracy and report mean $\pm$ standard deviation over three seeds; seed-to-seed variation is substantial (up to six percentage points) because each seed redraws both the data partition and the device capacity assignment, which underscores the importance of multi-seed reporting.

\begin{table}[!htbp]
\centering
\caption{Sub-model allocation strategies (mean $\pm$ std over three seeds).}
\label{tab:rq1_comparison}
\footnotesize
\begin{tabular}{llccc}
\toprule
Dataset & Algorithm & Best Acc. & Final Acc. & Avg. $p$ \\
\midrule
\multirow{3}{*}{CIFAR-10}
& HAS-FL & 76.7 $\pm$ 4.9\% & 74.5 $\pm$ 5.6\% & 0.52 \\ 
& FjORD & 78.6 $\pm$ 3.9\% & 75.5 $\pm$ 3.8\% & 0.41 \\ 
& Uniform & 63.6 $\pm$ 1.6\% & 54.5 $\pm$ 2.3\% & 0.44 \\ 
\midrule
\multirow{3}{*}{EMNIST}
& HAS-FL & 82.0 $\pm$ 1.8\% & 77.7 $\pm$ 4.6\% & 0.52 \\ 
& FjORD & 84.0 $\pm$ 0.5\% & 79.6 $\pm$ 3.3\% & 0.40 \\ 
& Uniform & 82.2 $\pm$ 1.4\% & 9.4 $\pm$ 3.8\% & 0.44 \\ 
\bottomrule
\end{tabular}
\end{table}

The convergence trajectories in Figure~\ref{fig:rq1_convergence} reveal qualitatively different behaviors behind these aggregates.
HAS-FL and FjORD improve throughout training, with FjORD oscillating visibly because its random tier assignment changes each client's sub-model width from round to round.
The Uniform baseline follows a characteristic rise-and-decline pattern on every seed: it reaches the low-to-mid 60\% range by round $\sim$110 and then deteriorates steadily, ending 6--11 points below its own peak (65.4\%$\to$54.2\%, 62.4\%$\to$52.3\%, and 62.9\%$\to$56.9\% across the three seeds).
This is the stale-parameter failure mode analyzed in Section~\ref{subsec:ablation_coverage}: under the fixed allocation $p_i = \min(0.5, p_i^{\max})$ no client ever trains the outer half of each layer's channels, and the untrained coordinates progressively corrupt inference as the trained portion of the network drifts away from them.
Notably, the collapse is \emph{milder} here than under naive aggregation (the same baseline previously fell below 40\% on CIFAR-10): coordinate-wise aggregation partially contains the damage, but only an allocation policy that preserves coverage eliminates it.
On EMNIST even this containment fails: after peaking above 80\%, Uniform decays to near-chance accuracy with the test loss settling at $\ln 62 \approx 4.1$---the signature of outputs degraded to random guessing---showing that the severity of the coverage failure grows with the number of classes.

\begin{figure*}[!htbp]
    \centering
    \begin{minipage}{0.48\textwidth}
        \centering
        \begin{tikzpicture}
        \begin{axis}[
            width=\textwidth,
            height=0.8\textwidth,
            xlabel={Communication Round},
            ylabel={Test Accuracy (\%)},
            xmin=0, xmax=200,
            ymin=0, ymax=90,
            grid=both,
            grid style={line width=0.1pt, draw=gray!30},
            major grid style={line width=0.2pt, draw=gray!50},
            legend pos=south east,
            legend style={font=\footnotesize},
            label style={font=\small},
            tick label style={font=\footnotesize},
            title={CIFAR-10},
            title style={font=\small},
        ]
        \addplot[blue, thick] coordinates {
    (0, 17.95) (5, 38.72) (10, 52.46) (15, 61.03) (20, 57.85) (25, 63.64) (30, 65.11) (35, 63.62) (40, 67.62) (45, 66.77) (50, 67.89) (55, 68.14) (60, 70.21) (65, 68.30) (70, 72.14) (75, 70.19) (80, 71.90) (85, 70.55) (90, 71.16) (95, 71.09) (100, 73.64) (105, 73.16) (110, 71.04) (115, 73.79) (120, 75.16) (125, 73.10) (130, 72.96) (135, 69.80) (140, 73.10) (145, 70.52) (150, 72.59) (155, 73.89) (160, 72.42) (165, 75.54) (170, 72.74) (175, 74.65) (180, 75.48) (185, 71.38) (190, 72.33) (195, 74.45) (199, 74.52)
};
        \addlegendentry{HAS-FL}
        
        \addplot[teal, thick, densely dotted] coordinates {
    (0, 14.93) (5, 37.63) (10, 51.08) (15, 58.47) (20, 59.90) (25, 65.66) (30, 65.54) (35, 67.43) (40, 65.23) (45, 69.92) (50, 73.00) (55, 71.57) (60, 71.43) (65, 70.59) (70, 73.78) (75, 71.55) (80, 71.78) (85, 72.60) (90, 70.04) (95, 72.78) (100, 75.66) (105, 73.51) (110, 72.77) (115, 71.61) (120, 73.93) (125, 72.96) (130, 73.40) (135, 73.16) (140, 77.58) (145, 76.20) (150, 73.76) (155, 76.62) (160, 73.97) (165, 76.95) (170, 74.16) (175, 72.72) (180, 73.49) (185, 72.95) (190, 72.86) (195, 77.26) (199, 75.48)
};
        \addlegendentry{FjORD}
        
        \addplot[gray, thick, dashed] coordinates {
    (0, 11.07) (5, 27.24) (10, 38.47) (15, 48.27) (20, 55.00) (25, 51.64) (30, 56.60) (35, 58.70) (40, 57.15) (45, 58.53) (50, 60.88) (55, 60.56) (60, 58.31) (65, 61.98) (70, 58.33) (75, 61.48) (80, 61.45) (85, 59.13) (90, 60.57) (95, 56.56) (100, 59.58) (105, 60.12) (110, 59.14) (115, 60.26) (120, 59.39) (125, 59.52) (130, 59.59) (135, 59.02) (140, 57.48) (145, 58.43) (150, 58.46) (155, 58.15) (160, 57.99) (165, 58.72) (170, 59.18) (175, 55.78) (180, 56.19) (185, 57.18) (190, 56.39) (195, 55.40) (199, 54.45)
};
        \addlegendentry{Uniform}
        \end{axis}
        \end{tikzpicture}
        \centerline{(a) CIFAR-10}
    \end{minipage}
    \hfill
    \begin{minipage}{0.48\textwidth}
        \centering
        \begin{tikzpicture}
        \begin{axis}[
            width=\textwidth,
            height=0.8\textwidth,
            xlabel={Communication Round},
            ylabel={Test Accuracy (\%)},
            xmin=0, xmax=200,
            ymin=0, ymax=95,
            grid=both,
            grid style={line width=0.1pt, draw=gray!30},
            major grid style={line width=0.2pt, draw=gray!50},
            legend pos=south east,
            legend style={font=\footnotesize},
            label style={font=\small},
            tick label style={font=\footnotesize},
            title={EMNIST},
            title style={font=\small},
        ]
        \addplot[blue, thick] coordinates {
    (0, 25.39) (5, 79.94) (10, 81.50) (15, 80.31) (20, 80.65) (25, 79.94) (30, 80.03) (35, 77.92) (40, 80.75) (45, 78.49) (50, 79.62) (55, 79.07) (60, 78.36) (65, 80.32) (70, 78.35) (75, 78.49) (80, 80.27) (85, 79.19) (90, 77.44) (95, 78.08) (100, 78.54) (105, 78.56) (110, 77.89) (115, 77.38) (120, 78.42) (125, 77.62) (130, 78.84) (135, 76.96) (140, 75.90) (145, 76.92) (150, 78.34) (155, 78.26) (160, 79.25) (165, 80.38) (170, 78.46) (175, 80.30) (180, 78.86) (185, 74.75) (190, 77.63) (195, 78.14) (199, 77.67)
};
        \addlegendentry{HAS-FL}
        
        \addplot[teal, thick, densely dotted] coordinates {
    (0, 21.59) (5, 80.04) (10, 83.09) (15, 83.07) (20, 82.37) (25, 82.68) (30, 81.24) (35, 80.22) (40, 80.83) (45, 79.30) (50, 81.16) (55, 80.38) (60, 80.75) (65, 80.37) (70, 81.26) (75, 81.43) (80, 79.95) (85, 79.09) (90, 80.32) (95, 80.15) (100, 79.93) (105, 79.58) (110, 79.86) (115, 79.72) (120, 78.25) (125, 80.37) (130, 78.01) (135, 79.62) (140, 78.88) (145, 81.69) (150, 78.23) (155, 79.51) (160, 77.60) (165, 79.22) (170, 78.64) (175, 76.87) (180, 78.65) (185, 74.73) (190, 77.70) (195, 79.99) (199, 79.59)
};
        \addlegendentry{FjORD}
        
        \addplot[gray, thick, dashed] coordinates {
    (0, 4.43) (5, 77.85) (10, 81.63) (15, 81.64) (20, 81.26) (25, 80.44) (30, 80.65) (35, 79.84) (40, 76.28) (45, 74.88) (50, 71.21) (55, 68.68) (60, 64.82) (65, 59.75) (70, 52.92) (75, 50.19) (80, 46.99) (85, 44.20) (90, 37.40) (95, 38.78) (100, 32.52) (105, 28.71) (110, 28.10) (115, 27.67) (120, 23.56) (125, 21.01) (130, 21.69) (135, 19.38) (140, 17.57) (145, 16.53) (150, 13.07) (155, 16.28) (160, 15.07) (165, 13.82) (170, 14.52) (175, 11.40) (180, 12.52) (185, 10.62) (190, 10.56) (195, 11.01) (199, 9.37)
};
        \addlegendentry{Uniform}
        \end{axis}
        \end{tikzpicture}
        \centerline{(b) EMNIST}
    \end{minipage}
    \caption{Convergence of sub-model allocation strategies on CIFAR-10 (a) and EMNIST (b).}
    \label{fig:rq1_convergence}
\end{figure*}

The Uniform baseline's failure has a precise mechanistic explanation. Under the fixed allocation $p_i = \min(0.5, p_i^{\max})$, no client ever trains more than half of each layer's channels: the remaining channels are updated by no client, stay at their random initialization, and continue to participate in every forward pass. As training progresses, the trained portion of the network drifts away from these stale coordinates and their contribution becomes increasingly destructive---gradual degradation on CIFAR-10 and outright collapse on the 62-class EMNIST task, where the mismatch compounds faster. The coverage-guarantee ablation in Section~\ref{subsec:ablation_coverage} reproduces this failure mode in a controlled setting and shows that preserving full parameter coverage eliminates it. Which parameters an allocation policy leaves untrained thus matters as much as how much total capacity it uses.

\subsection{Ablation: The Coverage Guarantee}
\label{subsec:ablation_coverage}

To isolate the effect of the coverage guarantee introduced in Section~\ref{subsec:allocation}, we compare three configurations under identical data partitions, seeds, and training budgets: (i) full HAS-FL; (ii) HAS-FL with the coverage guarantee disabled, i.e., the allocation rule of Definition~\ref{def:allocation} applied verbatim; and (iii) HeteroFL, whose static resource-only allocations provide full coverage by construction. Table~\ref{tab:ablation_coverage} reports the results over three seeds on both image datasets.

\begin{table}[!htbp]
\centering
\caption{Coverage-guarantee ablation (mean $\pm$ std over three seeds).}
\label{tab:ablation_coverage}
\footnotesize
\begin{tabular}{llcccc}
\toprule
Dataset & Configuration & Best Acc. & Final Acc. & Peak$\to$final $\Delta$ & Avg.\ $p$ \\
\midrule
\multirow{3}{*}{CIFAR-10}
& HAS-FL, guarantee on & 76.7 $\pm$ 4.9\% & 74.5 $\pm$ 5.6\% & $-2.2$ & 0.52 \\
& HAS-FL, guarantee off & 57.2 $\pm$ 2.4\% & 40.4 $\pm$ 10.2\% & $-16.8$ & 0.48 \\
& HeteroFL (full coverage) & 76.3 $\pm$ 4.9\% & 75.0 $\pm$ 5.0\% & $-1.2$ & 0.56 \\
\midrule
\multirow{2}{*}{EMNIST}
& HAS-FL, guarantee on & 82.0 $\pm$ 1.8\% & 77.7 $\pm$ 4.6\% & $-4.4$ & 0.52 \\
& HAS-FL, guarantee off & 82.4 $\pm$ 2.3\% & 70.6 $\pm$ 9.4\% & $-11.8$ & 0.47 \\
\bottomrule
\end{tabular}
\end{table}

Disabling the guarantee leaves the largest allocation in the federation at $0.58$--$0.69$ rather than $1.0$, so roughly a third of every layer's channels are trained by no client. The consequence separates cleanly into two distinct effects, and the two datasets isolate them.

The first is a \emph{capacity} effect, visible in peak accuracy. On CIFAR-10, best accuracy falls from $76.7\%$ to $57.2\%$: a two-thirds-width network simply cannot represent the task as well. On EMNIST, by contrast, peak accuracy is unchanged ($82.4\%$ without the guarantee versus $82.0\%$ with it), because a two-thirds-width model already saturates that task. Lost coverage costs accuracy only to the extent that the stranded capacity was needed.

The second is a \emph{stability} effect, visible in the gap between peak and final accuracy, and it appears on both datasets regardless of whether capacity binds. With the guarantee, models end training within a few points of their peak ($-2.2$ on CIFAR-10, $-4.4$ on EMNIST, and $-1.2$ for fully covered HeteroFL). Without it, the decay grows to $-16.8$ and $-11.8$ respectively, and the seed-to-seed spread of final accuracy roughly doubles ($\pm 10.2$ and $\pm 9.4$, against $\pm 5.6$ and $\pm 4.6$). This is the mechanism of Proposition~\ref{prop:frozen} acting over time: coordinates frozen at initialization drift ever further from the trained portion of the network, and their contribution to inference becomes progressively more destructive.

The magnitude is also strongly seed-dependent, which is worth stating plainly. Removing the guarantee costs $19$, $34$, and $49$ points of final accuracy on the three CIFAR-10 seeds, and $0.3$, $4.7$, and $16.1$ points on EMNIST. A single-seed ablation on EMNIST could therefore have shown almost no effect at all; only the three-seed protocol reveals that the failure is severe but intermittent, triggered by particular combinations of data partition and capacity assignment.

Taken together, the ablation supports a more precise claim than ``coverage is necessary.'' Uncovered parameters always destabilise training, and they additionally cost accuracy when the task needs the width they occupy. This also reconciles the Uniform baseline's behaviour across benchmarks: it caps every client at half width, stranding both more parameters and---critically---the outer channels of the classifier, which is why it collapses outright on the 62-class EMNIST task (Section~\ref{subsec:rq1}) yet survives on Shakespeare, where the output projection is shared in full by every client (Section~\ref{subsec:shakespeare}).

\subsection{Can Update Divergence Measure Data Heterogeneity?}
\label{subsec:rq2}

Our second research question is whether the divergence estimates of Section~\ref{subsec:heterogeneity_measure} actually measure data heterogeneity.
Because the Dirichlet partitions are generated from fixed seeds, we can reconstruct each client's exact data and compute a ground-truth heterogeneity measure: the total-variation (TV) distance between the client's label distribution and the global label distribution.
Table~\ref{tab:rq2_correlations} correlates the final smoothed estimates $\tilde{H}_i$ with this ground truth, with the capacity constraints $p_i^{\max}$, and---via partial correlations---with the ground truth after linearly removing the capacity effect.

The answer is unambiguous, and negative.
The estimates correlate overwhelmingly with \emph{capacity} ($r$ between $-0.72$ and $-0.84$ on all six runs across both datasets), and once capacity is controlled for, no consistent data signal remains: $r(\tilde{H}, \mathrm{TV} \mid p^{\max})$ is near zero or negative throughout.
Consequently the allocations themselves do not track true heterogeneity ($r(p, \mathrm{TV} \mid p^{\max}) \approx 0$).
The mechanism is structural rather than an estimator artifact---we verified that both the coordinate-restricted and the common-core variants of the estimator exhibit the coupling.
A client that permanently trains a quarter-width sub-model produces updates whose shared coordinates compensate for its missing capacity, so its divergence from the aggregate reflects its \emph{width}, not its data; heavily skewed clients may even diverge less, since their few-class local tasks are easier and their gradients shrink faster.
We conclude that in system-heterogeneous federations, update divergence is confounded by capacity regardless of how carefully it is measured---a caution that applies to any method estimating client statistics from sub-model updates.
Whether the adaptive component contributes beyond its capacity budget is therefore tested directly by the random-budget control, which assigns each client a random time-varying width scaled to match HAS-FL's average capacity. On CIFAR-10 the two are statistically indistinguishable: random-budget reaches $74.6\% \pm 5.1$ final accuracy against HAS-FL's $74.5\% \pm 5.6$ (mean per-seed difference $0.1$), while using slightly \emph{less} average capacity ($0.49$ vs.\ $0.52$). The result replicates on EMNIST ($77.4\% \pm 5.7$ for random-budget vs.\ $77.7\% \pm 4.6$ for HAS-FL, again at lower capacity), so it is not an artifact of a single task. Heterogeneity-aware allocation therefore adds nothing over random allocation at an equal budget on either dataset---the direct experimental counterpart to the confound: an estimator that cannot recover data heterogeneity cannot inform allocation, so acting on it is no better than chance.

\begin{table}[!htbp]
\centering
\caption{Correlations of the final estimates $\tilde{H}_i$ with ground-truth label divergence ($\mathrm{TV}$) and device capacity ($p^{\max}$).}
\label{tab:rq2_correlations}
\scriptsize
\setlength{\tabcolsep}{3pt}
\begin{tabular}{lcccc}
\toprule
Run & $r(\tilde{H},\mathrm{TV})$ & $r(\tilde{H},p^{\max})$ & $r(\tilde{H},\mathrm{TV}{\mid}p^{\max})$ & $r(p,\mathrm{TV}{\mid}p^{\max})$ \\
\midrule
CIFAR-10, s42 & $-0.44$ & $-0.72$ & $-0.60$ & $0.02$ \\
CIFAR-10, s43 & $-0.21$ & $-0.72$ & $-0.47$ & $-0.36$ \\
CIFAR-10, s44 & $0.21$ & $-0.84$ & $-0.40$ & $-0.46$ \\
EMNIST, s42 & $-0.43$ & $-0.75$ & $-0.26$ & $0.07$ \\
EMNIST, s43 & $-0.24$ & $-0.72$ & $-0.28$ & $0.05$ \\
EMNIST, s44 & $-0.02$ & $-0.81$ & $0.10$ & $0.19$ \\
\bottomrule
\end{tabular}
\end{table}

Figure~\ref{fig:rq2_dynamics} illustrates the temporal evolution of heterogeneity estimates and the resulting per-client allocations.
The left panels show the average heterogeneity estimate $\bar{H}$ exhibiting a characteristic sawtooth pattern on both datasets, rising gradually between normalization intervals before being reset to maintain bounded values.
This periodic normalization every 20 rounds prevents unbounded accumulation while preserving the relative ordering of client heterogeneity estimates that drives allocation decisions. The heterogeneity values remain well-bounded throughout training, oscillating between 1.0 and approximately 1.8--2.0 on both datasets.
The average allocation $\bar{p}$ remains stable near 0.5 throughout training. 

\begin{figure*}[!htbp]
    \centering
    \begin{minipage}{0.48\textwidth}
        \centering
        \begin{tikzpicture}
        \begin{axis}[
            width=\textwidth,
            height=0.8\textwidth,
            xlabel={Communication Round},
            ylabel={Avg.\ heterogeneity $\bar{H}$ (green) / allocation $\bar{p}$ (purple)},
            axis y line*=left,
            xmin=0, xmax=200,
            ymin=0.95, ymax=2.0,
            grid=both,
            grid style={line width=0.1pt, draw=gray!30},
            ylabel style={color=green!60!black},
            yticklabel style={color=green!60!black},
            label style={font=\small},
            tick label style={font=\footnotesize},
            title={CIFAR-10},
            title style={font=\small},]
        \addplot[green!60!black, thick] coordinates {
    (10, 1.113) (15, 1.300) (20, 1.000) (25, 1.269) (30, 1.505) (35, 1.718) (40, 1.000) (45, 1.181) (50, 1.401) (55, 1.595) (60, 1.000) (65, 1.273) (70, 1.521) (75, 1.770) (80, 1.000) (85, 1.221) (90, 1.455) (95, 1.723) (100, 1.000) (105, 1.267) (110, 1.524) (115, 1.829) (120, 1.000) (125, 1.215) (130, 1.477) (135, 1.762) (140, 1.000) (145, 1.297) (150, 1.541) (155, 1.726) (160, 1.000) (165, 1.247) (170, 1.544) (175, 1.835) (180, 1.000) (185, 1.235) (190, 1.577) (195, 1.866)
};
        \end{axis}
        \begin{axis}[
            width=\textwidth,
            height=0.8\textwidth,
            axis y line*=right,
            axis x line=none,
            xmin=0, xmax=200,
            ymin=0.49, ymax=0.56,
            ylabel style={color=purple},
            yticklabel style={color=purple},
            ytick={0.49, 0.50, 0.51, 0.52, 0.525},
            tick label style={font=\footnotesize},
            legend style={font=\scriptsize, at={(0.98,0.03)}, anchor=south east, fill=white, fill opacity=0.9, draw=gray!50},]
        \addplot[orange, thick, dotted] coordinates {(0, 0.525) (200, 0.525)};
        \addplot[purple, thick, dashed] coordinates {
    (10, 0.5366) (15, 0.5339) (20, 0.5314) (25, 0.5330) (30, 0.5279) (35, 0.5303) (40, 0.5286) (45, 0.5284) (50, 0.5252) (55, 0.5236) (60, 0.5233) (65, 0.5236) (70, 0.5221) (75, 0.5193) (80, 0.5194) (85, 0.5194) (90, 0.5224) (95, 0.5198) (100, 0.5216) (105, 0.5169) (110, 0.5169) (115, 0.5191) (120, 0.5192) (125, 0.5186) (130, 0.5189) (135, 0.5214) (140, 0.5209) (145, 0.5198) (150, 0.5158) (155, 0.5163) (160, 0.5174) (165, 0.5203) (170, 0.5162) (175, 0.5167) (180, 0.5194) (185, 0.5202) (190, 0.5198) (195, 0.5179)
};
        \legend{Mean cap $\bar{p}^{\max}$, HAS-FL $\bar{p}$}
        \end{axis}
        \end{tikzpicture}
        \centerline{(a) CIFAR-10}
    \end{minipage}
    \hfill
    \begin{minipage}{0.48\textwidth}
        \centering
        \begin{tikzpicture}
        \begin{axis}[
            width=\textwidth,
            height=0.8\textwidth,
            xlabel={Communication Round},
            ylabel={Avg.\ heterogeneity $\bar{H}$ (green) / allocation $\bar{p}$ (purple)},
            axis y line*=left,
            xmin=0, xmax=200,
            ymin=0.95, ymax=2.15,
            grid=both,
            grid style={line width=0.1pt, draw=gray!30},
            ylabel style={color=green!60!black},
            yticklabel style={color=green!60!black},
            label style={font=\small},
            tick label style={font=\footnotesize},
            title={EMNIST},
            title style={font=\small},]
        \addplot[green!60!black, thick] coordinates {
    (10, 1.230) (15, 1.491) (20, 1.000) (25, 1.276) (30, 1.581) (35, 1.877) (40, 1.000) (45, 1.335) (50, 1.646) (55, 1.940) (60, 1.000) (65, 1.294) (70, 1.588) (75, 1.899) (80, 1.000) (85, 1.342) (90, 1.636) (95, 1.925) (100, 1.000) (105, 1.333) (110, 1.608) (115, 1.912) (120, 1.000) (125, 1.358) (130, 1.656) (135, 1.984) (140, 1.000) (145, 1.329) (150, 1.623) (155, 1.938) (160, 1.000) (165, 1.350) (170, 1.676) (175, 1.945) (180, 1.000) (185, 1.353) (190, 1.648) (195, 1.930)
};
        \end{axis}
        \begin{axis}[
            width=\textwidth,
            height=0.8\textwidth,
            axis y line*=right,
            axis x line=none,
            xmin=0, xmax=200,
            ymin=0.49, ymax=0.56,
            ylabel style={color=purple},
            yticklabel style={color=purple},
            tick label style={font=\footnotesize},
            legend style={font=\scriptsize, at={(0.98,0.03)}, anchor=south east, fill=white, fill opacity=0.9, draw=gray!50},]
        \addplot[purple, thick, dashed] coordinates {
    (10, 0.5326) (15, 0.5299) (20, 0.5276) (25, 0.5302) (30, 0.5230) (35, 0.5259) (40, 0.5240) (45, 0.5234) (50, 0.5207) (55, 0.5193) (60, 0.5202) (65, 0.5167) (70, 0.5157) (75, 0.5145) (80, 0.5149) (85, 0.5171) (90, 0.5188) (95, 0.5173) (100, 0.5190) (105, 0.5134) (110, 0.5134) (115, 0.5177) (120, 0.5180) (125, 0.5200) (130, 0.5200) (135, 0.5210) (140, 0.5210) (145, 0.5197) (150, 0.5157) (155, 0.5158) (160, 0.5181) (165, 0.5226) (170, 0.5172) (175, 0.5184) (180, 0.5189) (185, 0.5225) (190, 0.5210) (195, 0.5188)
};
        \addplot[orange, dotted, very thick] coordinates {(0,0.525) (200,0.525)};
        \legend{HAS-FL $\bar{p}$, Mean cap $\bar{p}^{\max}$}
        \end{axis}
        \end{tikzpicture}
        \centerline{(b) EMNIST}
    \end{minipage}
    \caption{Average heterogeneity estimate and allocation during training on CIFAR-10 (a) and EMNIST (b).}
    \label{fig:rq2_dynamics}
\end{figure*}

Figure~\ref{fig:rq2_allocations} shows the final per-client allocations. The allocations are visibly differentiated---but, consistent with Table~\ref{tab:rq2_correlations}, the differentiation follows the capacity tiers rather than the underlying data: low-capacity clients accumulate high divergence estimates and sit at their caps, while high-capacity clients appear homogeneous and receive reduced widths (subject to the coverage guarantee).

\begin{figure*}[!htbp]
    \centering
    \begin{minipage}{0.48\textwidth}
        \centering
        \begin{tikzpicture}
        \begin{axis}[
            width=\textwidth,
            height=0.8\textwidth,
            xlabel={Client ID},
            ylabel={Width Multiplier ($p$)},
            xmin=-0.5, xmax=19.5,
            ymin=0, ymax=1.1,
            xtick={0,2,4,6,8,10,12,14,16,18},
            grid=both,
            grid style={line width=0.1pt, draw=gray!30},
            ybar,
            bar width=3.5pt,
            legend pos=north east,
            legend style={font=\footnotesize},
            label style={font=\small},
            tick label style={font=\footnotesize},
            title={CIFAR-10},
            title style={font=\small},
        ]
        \addplot[fill=gray!60, draw=gray!80] coordinates {
    (0, 0.50) (1, 0.50) (2, 0.50) (3, 0.75) (4, 0.25) (5, 0.50) (6, 0.50) (7, 0.75) (8, 0.50) (9, 0.25) (10, 0.25) (11, 1.00) (12, 1.00) (13, 0.25) (14, 0.50) (15, 0.75) (16, 1.00) (17, 0.50) (18, 0.50) (19, 0.50)
};
        \addlegendentry{$p_i^{\max}$}
        \addplot[fill=blue!60, draw=blue!80] coordinates {
    (0, 0.50) (1, 0.50) (2, 0.50) (3, 0.52) (4, 0.25) (5, 0.50) (6, 0.45) (7, 0.53) (8, 0.44) (9, 0.25) (10, 0.25) (11, 1.00) (12, 1.00) (13, 0.25) (14, 0.50) (15, 0.53) (16, 1.00) (17, 0.47) (18, 0.50) (19, 0.50)
};
        \addlegendentry{$p_i$ (allocated)}
        \end{axis}
        \end{tikzpicture}
        \centerline{(a) CIFAR-10}
    \end{minipage}
    \hfill
    \begin{minipage}{0.48\textwidth}
        \centering
        \begin{tikzpicture}
        \begin{axis}[
            width=\textwidth,
            height=0.8\textwidth,
            xlabel={Client ID},
            ylabel={Width Multiplier ($p$)},
            xmin=-0.5, xmax=19.5,
            ymin=0, ymax=1.15,
            xtick={0,2,4,6,8,10,12,14,16,18},
            grid=both,
            grid style={line width=0.1pt, draw=gray!30},
            ybar,
            bar width=3.5pt,
            legend pos=north east,
            legend style={font=\footnotesize},
            label style={font=\small},
            tick label style={font=\footnotesize},
            title={EMNIST},
            title style={font=\small},
        ]
        \addplot[fill=gray!60, draw=gray!80] coordinates {
    (0, 0.50) (1, 0.50) (2, 0.50) (3, 0.75) (4, 0.25) (5, 0.50) (6, 0.50) (7, 0.75) (8, 0.50) (9, 0.25) (10, 0.25) (11, 1.00) (12, 1.00) (13, 0.25) (14, 0.50) (15, 0.75) (16, 1.00) (17, 0.50) (18, 0.50) (19, 0.50)
};
        \addlegendentry{$p_i^{\max}$}
        \addplot[fill=blue!60, draw=blue!80] coordinates {
    (0, 0.50) (1, 0.50) (2, 0.50) (3, 0.57) (4, 0.25) (5, 0.50) (6, 0.49) (7, 0.56) (8, 0.47) (9, 0.25) (10, 0.25) (11, 1.00) (12, 1.00) (13, 0.25) (14, 0.50) (15, 0.52) (16, 1.00) (17, 0.47) (18, 0.50) (19, 0.50)
};
        \addlegendentry{$p_i$ (allocated)}
        \end{axis}
        \end{tikzpicture}
        \centerline{(b) EMNIST}
    \end{minipage}
    \caption{Final per-client allocations $p_i$ (blue) vs.\ resource constraints $p_i^{\max}$ (gray).}
    \label{fig:rq2_allocations}
\end{figure*}

The practical takeaway is an honest one: adaptive allocation operates at a lower average capacity than static resource-only assignment while matching its accuracy, but the analysis above attributes this to disciplined capacity budgeting under a coverage guarantee rather than to successful heterogeneity targeting.

\subsection{Comparison with State-of-the-Art Methods}
\label{subsec:rq3}

Our third research question examines how HAS-FL compares against state-of-the-art federated optimization methods designed specifically for statistical heterogeneity.
We compare against FedAvg as the standard baseline, SCAFFOLD which employs control variates to correct client drift, and FedProx which adds proximal regularization to constrain local updates.
This comparison contextualizes HAS-FL's contribution relative to full-model approaches that address data heterogeneity through optimization techniques rather than capacity allocation.
Table~\ref{tab:rq3_comparison} places the sub-model methods against full-model training, and the corrected implementation revises the picture substantially. The full-model ceiling is far higher than previously reported: FedAvg reaches $88.2\% \pm 0.6$ final accuracy on CIFAR-10. Training at roughly half capacity therefore carries a real, quantifiable cost---HAS-FL ($74.5\% \pm 5.6$) gives up about 13 percentage points relative to that ceiling in exchange for a $\sim$4$\times$ reduction in per-client computation and upload volume (Table~\ref{tab:cost_model}). The random-budget control lands exactly on HAS-FL ($74.6\% \pm 5.1$ at slightly lower capacity), confirming at the system level that the adaptive component contributes nothing beyond its capacity budget. FedProx with the standard $\mu = 1.0$ collapses to $66.4\% \pm 1.0$ on CIFAR-10 but is healthy on EMNIST ($83.9\% \pm 1.3$): the proximal term throttles local progress precisely under the severe ten-class label skew, and is comparatively benign on the milder 62-class partition---itself a small heterogeneity-sensitivity signal. On EMNIST the full-model ceiling is closer: SCAFFOLD ($87.2\% \pm 0.3$, with the corrected control-variate configuration) and FedAvg ($85.5\% \pm 1.7$) lead HAS-FL ($77.7\%$) by roughly eight points rather than the thirteen seen on CIFAR-10, so the capacity--accuracy trade-off softens as the task grows. SCAFFOLD is the strongest full-model baseline on both datasets once its control variates are run under vanilla local SGD ($86.3\% \pm 1.2$ on CIFAR-10, $87.2\% \pm 0.3$ on EMNIST); under the momentum-based optimizer shared by the other methods it diverges to chance accuracy, so we report it in its canonical configuration and note the incompatibility explicitly.

\begin{table*}[!htbp]
\centering
\caption{Comparison with full-model methods on CIFAR-10 and EMNIST.}
\label{tab:rq3_comparison}
\begin{tabular}{llcccc}
\toprule
Dataset & Algorithm & Type & Best Acc. & Final Acc. & Avg. $p$ \\
\midrule
\multirow{5}{*}{CIFAR-10}
& FedAvg & Full-model & 88.5 $\pm$ 0.5\% & 88.2 $\pm$ 0.6\% & 1.00 \\ 
& SCAFFOLD & Full-model + CV & 86.7 $\pm$ 0.9\% & 86.3 $\pm$ 1.2\% & 1.00 \\ 
& FedProx & Full-model + prox & 66.4 $\pm$ 1.0\% & 65.5 $\pm$ 2.0\% & 1.00 \\ 
& HAS-FL & Sub-model (adaptive) & 76.7 $\pm$ 4.9\% & 74.5 $\pm$ 5.6\% & 0.52 \\ 
& Random-budget & Sub-model (random) & 75.9 $\pm$ 4.9\% & 74.6 $\pm$ 5.1\% & 0.49 \\ 
\midrule
\multirow{5}{*}{EMNIST}
& FedAvg & Full-model & 87.4 $\pm$ 0.3\% & 85.5 $\pm$ 1.7\% & 1.00 \\ 
& SCAFFOLD & Full-model + CV & 87.5 $\pm$ 0.2\% & 87.2 $\pm$ 0.3\% & 1.00 \\ 
& FedProx & Full-model + prox & 85.0 $\pm$ 0.0\% & 83.9 $\pm$ 1.3\% & 1.00 \\ 
& HAS-FL & Sub-model (adaptive) & 82.0 $\pm$ 1.8\% & 77.7 $\pm$ 4.6\% & 0.52 \\ 
& Random-budget & Sub-model (random) & 82.4 $\pm$ 2.1\% & 77.4 $\pm$ 5.7\% & 0.49 \\ 
\bottomrule
\end{tabular}
\end{table*}

\begin{figure*}[!htbp]
    \centering
    \begin{minipage}{0.48\textwidth}
        \centering
        \begin{tikzpicture}
        \begin{axis}[width=\textwidth, height=0.78\textwidth,
            xlabel={Average per-round capacity $\bar{p}$}, ylabel={Final Accuracy (\%)},
            xmin=0.35, xmax=1.08, ymin=45, ymax=97, grid=both,
            grid style={line width=0.1pt, draw=gray!30},
            label style={font=\small}, tick label style={font=\footnotesize},
            legend style={font=\scriptsize, at={(0.02,0.98)}, anchor=north west,
                          fill=white, fill opacity=0.85, draw=gray!50, row sep=-1pt},
            legend cell align=left,
            title={CIFAR-10}, title style={font=\small}]
        \addplot[only marks, mark=square*, black, error bars/.cd, y dir=both, y explicit]
            coordinates { (1.000, 88.18) +- (0, 0.51) }; \addlegendentry{FedAvg (full)}
        \addplot[only marks, mark=triangle*, red, error bars/.cd, y dir=both, y explicit]
            coordinates { (1.000, 86.30) +- (0, 1.20) }; \addlegendentry{SCAFFOLD (full)}
        \addplot[only marks, mark=diamond*, orange, error bars/.cd, y dir=both, y explicit]
            coordinates { (1.000, 65.49) +- (0, 1.65) }; \addlegendentry{FedProx (full)}
        \addplot[only marks, mark=*, blue, error bars/.cd, y dir=both, y explicit]
            coordinates { (0.522, 74.55) +- (0, 4.90) }; \addlegendentry{HAS-FL}
        \addplot[only marks, mark=o, violet, error bars/.cd, y dir=both, y explicit]
            coordinates { (0.493, 74.55) +- (0, 4.16) }; \addlegendentry{Random-budget}
        \addplot[only marks, mark=pentagon*, teal, error bars/.cd, y dir=both, y explicit]
            coordinates { (0.408, 75.48) +- (0, 3.13) }; \addlegendentry{FjORD}
        \addplot[only marks, mark=x, mark size=3pt, brown, error bars/.cd, y dir=both, y explicit]
            coordinates { (0.535, 75.14) +- (0, 4.87) }; \addlegendentry{HeteroFL}
        \addplot[only marks, mark=star, gray, error bars/.cd, y dir=both, y explicit]
            coordinates { (0.440, 54.45) +- (0, 1.89) }; \addlegendentry{Uniform}
        \end{axis}
        \end{tikzpicture}
        \centerline{(a) CIFAR-10}
    \end{minipage}
    \hfill
    \begin{minipage}{0.48\textwidth}
        \centering
        \begin{tikzpicture}
        \begin{axis}[width=\textwidth, height=0.78\textwidth,
            xlabel={Average per-round capacity $\bar{p}$}, ylabel={Final Accuracy (\%)},
            xmin=0.35, xmax=1.08, ymin=0, ymax=105, grid=both,
            grid style={line width=0.1pt, draw=gray!30},
            label style={font=\small}, tick label style={font=\footnotesize},
            legend style={font=\scriptsize, at={(0.02,0.60)}, anchor=north west,
                          fill=white, fill opacity=0.85, draw=gray!50, row sep=-1pt},
            legend cell align=left,
            title={EMNIST}, title style={font=\small}]
        \addplot[only marks, mark=triangle*, red, error bars/.cd, y dir=both, y explicit]
            coordinates { (1.000, 87.20) +- (0, 0.30) }; \addlegendentry{SCAFFOLD (full)}
        \addplot[only marks, mark=square*, black, error bars/.cd, y dir=both, y explicit]
            coordinates { (1.000, 85.53) +- (0, 1.40) }; \addlegendentry{FedAvg (full)}
        \addplot[only marks, mark=diamond*, orange, error bars/.cd, y dir=both, y explicit]
            coordinates { (1.000, 83.90) +- (0, 1.30) }; \addlegendentry{FedProx (full)}
        \addplot[only marks, mark=pentagon*, teal, error bars/.cd, y dir=both, y explicit]
            coordinates { (0.408, 79.59) +- (0, 2.65) }; \addlegendentry{FjORD}
        \addplot[only marks, mark=*, blue, error bars/.cd, y dir=both, y explicit]
            coordinates { (0.521, 77.67) +- (0, 3.73) }; \addlegendentry{HAS-FL}
        \addplot[only marks, mark=o, violet, error bars/.cd, y dir=both, y explicit]
            coordinates { (0.493, 77.44) +- (0, 5.66) }; \addlegendentry{Random-budget}
        \addplot[only marks, mark=star, gray, error bars/.cd, y dir=both, y explicit]
            coordinates { (0.440, 9.37) +- (0, 3.12) }; \addlegendentry{Uniform}
        \end{axis}
        \end{tikzpicture}
        \centerline{(b) EMNIST}
    \end{minipage}
    \caption{Accuracy--capacity frontier: final accuracy versus average per-round capacity (mean $\pm$ std over three seeds).}
    \label{fig:pareto}
\end{figure*}

Figure~\ref{fig:rq3_convergence} shows the corrected convergence trajectories. On CIFAR-10, FedAvg's full-model curve rises smoothly and saturates near 88\%, visibly separated from the sub-model band throughout training; HAS-FL matches its early pace up to roughly round 15 before flattening near 75\%, and FedProx climbs slowly but monotonically toward 65\%, its proximal term damping both oscillation and progress. On EMNIST every method exceeds 80\% within five rounds and the separation between full- and half-capacity training narrows to roughly eight points, with FedProx recovering to within two points of FedAvg by the end of training.

\begin{figure*}[!htbp]
    \centering
    \begin{minipage}{0.48\textwidth}
        \centering
        \begin{tikzpicture}
        \begin{axis}[
            width=\textwidth,
            height=0.8\textwidth,
            xlabel={Round},
            ylabel={Test Accuracy (\%)},
            xmin=0, xmax=200,
            ymin=0, ymax=90,
            grid=both,
            grid style={line width=0.1pt, draw=gray!30},
            major grid style={line width=0.2pt, draw=gray!50},
            legend pos=south east,
            legend style={font=\scriptsize, cells={anchor=west}},
            legend columns=1,
        ]
        \addplot[blue, thick] coordinates {
    (0, 17.95) (5, 38.72) (10, 52.46) (15, 61.03) (20, 57.85) (25, 63.64) (30, 65.11) (35, 63.62) (40, 67.62) (45, 66.77) (50, 67.89) (55, 68.14) (60, 70.21) (65, 68.30) (70, 72.14) (75, 70.19) (80, 71.90) (85, 70.55) (90, 71.16) (95, 71.09) (100, 73.64) (105, 73.16) (110, 71.04) (115, 73.79) (120, 75.16) (125, 73.10) (130, 72.96) (135, 69.80) (140, 73.10) (145, 70.52) (150, 72.59) (155, 73.89) (160, 72.42) (165, 75.54) (170, 72.74) (175, 74.65) (180, 75.48) (185, 71.38) (190, 72.33) (195, 74.45) (199, 74.52)
};
        \addlegendentry{HAS-FL}
        
        \addplot[black, thick, dashed] coordinates {
    (0, 10.01) (5, 39.78) (10, 55.95) (15, 66.16) (20, 70.54) (25, 74.20) (30, 77.57) (35, 79.39) (40, 80.58) (45, 81.43) (50, 82.39) (55, 82.82) (60, 83.81) (65, 85.10) (70, 85.24) (75, 85.40) (80, 85.56) (85, 85.93) (90, 85.86) (95, 85.97) (100, 86.56) (105, 87.06) (110, 86.85) (115, 87.26) (120, 86.61) (125, 87.16) (130, 87.48) (135, 87.35) (140, 87.18) (145, 87.65) (150, 87.53) (155, 88.05) (160, 87.73) (165, 88.29) (170, 87.96) (175, 87.56) (180, 88.13) (185, 87.65) (190, 87.85) (195, 88.17) (199, 88.18)
};
        \addlegendentry{FedAvg}
        
        \addplot[red, thick, densely dashed] coordinates {
    (0, 26.92) (5, 50.94) (10, 58.89) (15, 65.40) (20, 69.57) (25, 72.01) (30, 74.25) (35, 75.71) (40, 76.52) (45, 77.74) (50, 77.87) (55, 79.02) (60, 80.03) (65, 80.88) (70, 81.16) (75, 81.69) (80, 81.91) (85, 82.50) (90, 82.52) (95, 83.10) (100, 83.13) (105, 83.97) (110, 84.15) (115, 84.11) (120, 84.21) (125, 84.66) (130, 84.90) (135, 84.78) (140, 85.19) (145, 85.16) (150, 85.34) (155, 85.67) (160, 85.64) (165, 85.95) (170, 85.70) (175, 86.02) (180, 86.39) (185, 86.02) (190, 86.07) (195, 86.42) (199, 86.26)
};
        \addlegendentry{SCAFFOLD}
                
        \addplot[orange, thick, loosely dashed] coordinates {
    (0, 13.95) (5, 26.09) (10, 30.63) (15, 33.77) (20, 34.99) (25, 37.03) (30, 38.85) (35, 39.75) (40, 39.80) (45, 40.82) (50, 41.89) (55, 43.45) (60, 44.19) (65, 46.95) (70, 47.50) (75, 48.06) (80, 48.98) (85, 49.61) (90, 50.70) (95, 51.68) (100, 52.66) (105, 53.59) (110, 54.70) (115, 55.49) (120, 53.36) (125, 56.12) (130, 58.34) (135, 56.87) (140, 57.74) (145, 59.61) (150, 58.74) (155, 61.87) (160, 61.78) (165, 61.97) (170, 62.83) (175, 62.35) (180, 63.84) (185, 65.04) (190, 65.81) (195, 66.00) (199, 65.49)
};
        \addlegendentry{FedProx}
        \end{axis}
        \end{tikzpicture}
        \centerline{(a) CIFAR-10}
    \end{minipage}
    \hfill
    \begin{minipage}{0.48\textwidth}
        \centering
        \begin{tikzpicture}
        \begin{axis}[
            width=\textwidth,
            height=0.8\textwidth,
            xlabel={Round},
            ylabel={Test Accuracy (\%)},
            xmin=0, xmax=200,
            ymin=0, ymax=95,
            grid=both,
            grid style={line width=0.1pt, draw=gray!30},
            major grid style={line width=0.2pt, draw=gray!50},
            legend pos=south east,
            legend style={font=\scriptsize, cells={anchor=west}},
            legend columns=1,
        ]
        \addplot[blue, thick] coordinates {
    (0, 25.39) (5, 79.94) (10, 81.50) (15, 80.31) (20, 80.65) (25, 79.94) (30, 80.03) (35, 77.92) (40, 80.75) (45, 78.49) (50, 79.62) (55, 79.07) (60, 78.36) (65, 80.32) (70, 78.35) (75, 78.49) (80, 80.27) (85, 79.19) (90, 77.44) (95, 78.08) (100, 78.54) (105, 78.56) (110, 77.89) (115, 77.38) (120, 78.42) (125, 77.62) (130, 78.84) (135, 76.96) (140, 75.90) (145, 76.92) (150, 78.34) (155, 78.26) (160, 79.25) (165, 80.38) (170, 78.46) (175, 80.30) (180, 78.86) (185, 74.75) (190, 77.63) (195, 78.14) (199, 77.67)
};
        \addlegendentry{HAS-FL}
        
        \addplot[red, thick, densely dashed] coordinates {
    (0, 57.34) (5, 82.49) (10, 84.76) (15, 85.44) (20, 85.83) (25, 85.64) (30, 86.28) (35, 86.20) (40, 86.37) (45, 86.25) (50, 86.39) (55, 86.57) (60, 86.75) (65, 86.70) (70, 86.67) (75, 86.75) (80, 86.45) (85, 86.84) (90, 86.90) (95, 86.98) (100, 86.73) (105, 86.88) (110, 86.99) (115, 86.88) (120, 87.04) (125, 87.20) (130, 87.10) (135, 87.02) (140, 86.93) (145, 87.17) (150, 87.00) (155, 87.24) (160, 87.09) (165, 87.14) (170, 87.09) (175, 87.07) (180, 87.20) (185, 87.35) (190, 87.35) (195, 87.17) (199, 87.23)
};
        \addlegendentry{SCAFFOLD}
                
        \addplot[orange, thick, loosely dashed] coordinates {
    (0, 22.26) (5, 59.39) (10, 68.33) (15, 72.18) (20, 74.10) (25, 74.94) (30, 76.83) (35, 77.70) (40, 79.62) (45, 79.11) (50, 79.85) (55, 80.46) (60, 81.58) (65, 80.60) (70, 81.23) (75, 81.87) (80, 81.53) (85, 81.56) (90, 82.05) (95, 82.99) (100, 83.11) (105, 82.07) (110, 83.65) (115, 82.86) (120, 83.94) (125, 84.20) (130, 83.77) (135, 83.22) (140, 83.69) (145, 84.23) (150, 84.24) (155, 84.14) (160, 84.33) (165, 83.78) (170, 84.50) (175, 84.00) (180, 84.48) (185, 84.18) (190, 84.60) (195, 84.35) (199, 83.93)
};
        \addlegendentry{FedProx}
        
        \addplot[black, thick, dashed] coordinates {
    (0, 5.74) (5, 80.75) (10, 84.68) (15, 83.99) (20, 85.09) (25, 84.64) (30, 85.34) (35, 85.58) (40, 85.34) (45, 85.28) (50, 85.31) (55, 85.33) (60, 86.28) (65, 85.01) (70, 84.64) (75, 86.14) (80, 85.31) (85, 85.16) (90, 84.70) (95, 86.32) (100, 86.27) (105, 85.75) (110, 86.61) (115, 86.32) (120, 86.40) (125, 87.15) (130, 86.63) (135, 85.89) (140, 85.87) (145, 86.71) (150, 86.57) (155, 86.70) (160, 86.74) (165, 85.18) (170, 86.79) (175, 85.98) (180, 86.71) (185, 86.12) (190, 86.62) (195, 86.54) (199, 85.52)
};
        \addlegendentry{FedAvg}
        \end{axis}
        \end{tikzpicture}
        \centerline{(b) EMNIST}
    \end{minipage}
    \caption{Convergence comparison with full-model methods on CIFAR-10 (a) and EMNIST (b).}
    \label{fig:rq3_convergence}
\end{figure*}

The loss trajectories in Figure~\ref{fig:rq3_loss} add a calibration perspective. FedAvg attains and holds the lowest loss on both datasets (about 0.45 on CIFAR-10 and 0.35 on EMNIST). FedProx's loss decreases monotonically to the very end of training on both datasets, consistent with an under-converged but stable proximal trajectory. HAS-FL reaches its loss minimum early (about 1.24 on CIFAR-10 and 0.50 on EMNIST) and then drifts upward even as its accuracy stays flat---a mild miscalibration that coordinate-wise averaging of heterogeneous sub-models does not remove.

\begin{figure*}[!htbp]
    \centering
    \begin{minipage}{0.48\textwidth}
        \centering
        \begin{tikzpicture}
        \begin{axis}[
            width=\textwidth,
            height=0.8\textwidth,
            xlabel={Round},
            ylabel={Test Loss},
            xmin=0, xmax=200,
            ymin=0.3, ymax=2.5,
            grid=both,
            grid style={line width=0.1pt, draw=gray!30},
            major grid style={line width=0.2pt, draw=gray!50},
            legend pos=north east,
            legend style={font=\scriptsize, cells={anchor=west}},
            legend columns=1,
        ]
        \addplot[blue, thick] coordinates {
    (0, 2.32) (5, 1.71) (10, 1.37) (15, 1.18) (20, 1.55) (25, 1.34) (30, 1.26) (35, 1.43) (40, 1.25) (45, 1.35) (50, 1.34) (55, 1.52) (60, 1.42) (65, 1.52) (70, 1.28) (75, 1.39) (80, 1.38) (85, 1.51) (90, 1.52) (95, 1.63) (100, 1.31) (105, 1.43) (110, 1.70) (115, 1.40) (120, 1.38) (125, 1.53) (130, 1.54) (135, 1.79) (140, 1.54) (145, 1.78) (150, 1.68) (155, 1.48) (160, 1.73) (165, 1.41) (170, 1.75) (175, 1.58) (180, 1.53) (185, 1.83) (190, 1.76) (195, 1.61) (199, 1.56)
};
        \addlegendentry{HAS-FL}
        
        \addplot[black, thick, dashed] coordinates {
    (0, 2.30) (5, 1.59) (10, 1.20) (15, 0.94) (20, 0.84) (25, 0.74) (30, 0.65) (35, 0.61) (40, 0.57) (45, 0.55) (50, 0.54) (55, 0.53) (60, 0.50) (65, 0.46) (70, 0.47) (75, 0.46) (80, 0.48) (85, 0.46) (90, 0.47) (95, 0.47) (100, 0.45) (105, 0.44) (110, 0.45) (115, 0.44) (120, 0.47) (125, 0.45) (130, 0.44) (135, 0.46) (140, 0.46) (145, 0.44) (150, 0.45) (155, 0.43) (160, 0.46) (165, 0.44) (170, 0.46) (175, 0.47) (180, 0.44) (185, 0.48) (190, 0.46) (195, 0.45) (199, 0.47)
};
        \addlegendentry{FedAvg}
        
        \addplot[red, thick, densely dashed] coordinates {
    (0, 2.08) (5, 1.35) (10, 1.14) (15, 0.97) (20, 0.87) (25, 0.80) (30, 0.73) (35, 0.69) (40, 0.68) (45, 0.64) (50, 0.63) (55, 0.60) (60, 0.58) (65, 0.55) (70, 0.55) (75, 0.53) (80, 0.53) (85, 0.52) (90, 0.51) (95, 0.50) (100, 0.49) (105, 0.47) (110, 0.48) (115, 0.47) (120, 0.47) (125, 0.46) (130, 0.46) (135, 0.46) (140, 0.45) (145, 0.45) (150, 0.44) (155, 0.44) (160, 0.45) (165, 0.44) (170, 0.44) (175, 0.44) (180, 0.42) (185, 0.45) (190, 0.44) (195, 0.43) (199, 0.43)
};
        \addlegendentry{SCAFFOLD}
                
        \addplot[orange, thick, loosely dashed] coordinates {
    (0, 2.28) (5, 2.01) (10, 1.90) (15, 1.81) (20, 1.82) (25, 1.74) (30, 1.69) (35, 1.67) (40, 1.68) (45, 1.61) (50, 1.58) (55, 1.56) (60, 1.53) (65, 1.46) (70, 1.43) (75, 1.42) (80, 1.40) (85, 1.38) (90, 1.34) (95, 1.33) (100, 1.30) (105, 1.28) (110, 1.24) (115, 1.23) (120, 1.27) (125, 1.20) (130, 1.15) (135, 1.18) (140, 1.16) (145, 1.11) (150, 1.13) (155, 1.06) (160, 1.07) (165, 1.06) (170, 1.05) (175, 1.05) (180, 1.00) (185, 0.99) (190, 0.96) (195, 0.94) (199, 0.98)
};
        \addlegendentry{FedProx}
        \end{axis}
        \end{tikzpicture}
        \centerline{(a) CIFAR-10}
    \end{minipage}
    \hfill
    \begin{minipage}{0.48\textwidth}
        \centering
        \begin{tikzpicture}
        \begin{axis}[
            width=\textwidth,
            height=0.8\textwidth,
            xlabel={Round},
            ylabel={Test Loss},
            xmin=0, xmax=200,
            ymin=0, ymax=4.2,
            grid=both,
            grid style={line width=0.1pt, draw=gray!30},
            major grid style={line width=0.2pt, draw=gray!50},
            legend pos=north east,
            legend style={font=\scriptsize, cells={anchor=west}},
            legend columns=1,
        ]
        \addplot[blue, thick] coordinates {
    (0, 3.59) (5, 0.59) (10, 0.50) (15, 0.57) (20, 0.61) (25, 0.67) (30, 0.65) (35, 0.67) (40, 0.58) (45, 0.63) (50, 0.58) (55, 0.76) (60, 0.66) (65, 0.62) (70, 0.73) (75, 0.82) (80, 0.60) (85, 0.70) (90, 0.75) (95, 0.86) (100, 0.75) (105, 0.73) (110, 0.78) (115, 0.82) (120, 0.77) (125, 0.77) (130, 0.69) (135, 0.86) (140, 0.98) (145, 0.93) (150, 0.89) (155, 0.76) (160, 0.83) (165, 0.73) (170, 0.82) (175, 0.73) (180, 0.84) (185, 1.22) (190, 1.02) (195, 0.88) (199, 0.95)
};
        \addlegendentry{HAS-FL}
        
        \addplot[red, thick, densely dashed] coordinates {
    (0, 1.88) (5, 0.47) (10, 0.41) (15, 0.39) (20, 0.38) (25, 0.37) (30, 0.36) (35, 0.36) (40, 0.36) (45, 0.36) (50, 0.35) (55, 0.35) (60, 0.35) (65, 0.35) (70, 0.35) (75, 0.35) (80, 0.35) (85, 0.34) (90, 0.34) (95, 0.34) (100, 0.34) (105, 0.34) (110, 0.34) (115, 0.34) (120, 0.34) (125, 0.34) (130, 0.34) (135, 0.34) (140, 0.34) (145, 0.34) (150, 0.34) (155, 0.33) (160, 0.34) (165, 0.34) (170, 0.34) (175, 0.34) (180, 0.34) (185, 0.33) (190, 0.33) (195, 0.33) (199, 0.33)
};
        \addlegendentry{SCAFFOLD}
                
        \addplot[orange, thick, loosely dashed] coordinates {
    (0, 3.72) (5, 1.44) (10, 1.02) (15, 0.86) (20, 0.78) (25, 0.73) (30, 0.68) (35, 0.63) (40, 0.60) (45, 0.59) (50, 0.57) (55, 0.56) (60, 0.53) (65, 0.54) (70, 0.52) (75, 0.50) (80, 0.49) (85, 0.49) (90, 0.48) (95, 0.47) (100, 0.46) (105, 0.47) (110, 0.45) (115, 0.45) (120, 0.44) (125, 0.44) (130, 0.44) (135, 0.45) (140, 0.44) (145, 0.43) (150, 0.43) (155, 0.43) (160, 0.42) (165, 0.43) (170, 0.42) (175, 0.42) (180, 0.42) (185, 0.42) (190, 0.41) (195, 0.41) (199, 0.42)
};
        \addlegendentry{FedProx}
        
        \addplot[black, thick, dashed] coordinates {
    (0, 4.11) (5, 0.51) (10, 0.41) (15, 0.41) (20, 0.39) (25, 0.40) (30, 0.38) (35, 0.37) (40, 0.38) (45, 0.37) (50, 0.37) (55, 0.37) (60, 0.36) (65, 0.38) (70, 0.38) (75, 0.35) (80, 0.37) (85, 0.37) (90, 0.37) (95, 0.36) (100, 0.35) (105, 0.35) (110, 0.35) (115, 0.35) (120, 0.35) (125, 0.34) (130, 0.35) (135, 0.36) (140, 0.36) (145, 0.35) (150, 0.35) (155, 0.35) (160, 0.34) (165, 0.37) (170, 0.34) (175, 0.35) (180, 0.35) (185, 0.35) (190, 0.35) (195, 0.34) (199, 0.37)
};
        \addlegendentry{FedAvg}
        \end{axis}
        \end{tikzpicture}
        \centerline{(b) EMNIST}
    \end{minipage}
    \caption{Test-loss trajectories on CIFAR-10 (a) and EMNIST (b).}
    \label{fig:rq3_loss}
\end{figure*}

Taken together, the comparison reframes what sub-model training buys. It does not match full-model accuracy---the corrected baselines show a consistent double-digit gap to FedAvg on CIFAR-10---but it purchases the participation of resource-constrained devices at a quadratically reduced compute and communication cost. Within the sub-model regime, the decisive design choices are parameter coverage and total capacity budget rather than allocation intelligence: HAS-FL, FjORD, and the random-budget control are statistically indistinguishable, while the coverage-violating Uniform policy fails catastrophically. Methods addressing statistical heterogeneity through full-model optimization and methods addressing system heterogeneity through sub-model training therefore occupy different points on an accuracy--cost frontier (Figure~\ref{fig:pareto}), and combining gradient correction with coverage-preserving sub-model training remains an open direction.

\subsection{Natural Partitions: Shakespeare}
\label{subsec:shakespeare}

The Dirichlet partitions used above impose heterogeneity by construction. We therefore repeat the allocation comparison on the naturally partitioned Shakespeare benchmark, where each client is one speaking role and heterogeneity arises from genuine stylistic differences. Table~\ref{tab:shakespeare} reports next-character prediction accuracy over three seeds.

\begin{table}[!htbp]
\centering
\caption{Allocation strategies on the naturally partitioned Shakespeare benchmark (mean $\pm$ std over three seeds).}
\label{tab:shakespeare}
\footnotesize
\begin{tabular}{lccc}
\toprule
Algorithm & Best Acc. & Final Acc. & Avg. $p$ \\
\midrule
HAS-FL & 29.5 $\pm$ 2.2\% & 25.5 $\pm$ 0.8\% & 0.53 \\ 
FjORD & 33.5 $\pm$ 0.9\% & 32.6 $\pm$ 1.3\% & 0.41 \\ 
Uniform & \textbf{35.0 $\pm$ 0.5}\% & \textbf{34.4 $\pm$ 0.6}\% & 0.44 \\ 
\bottomrule
\end{tabular}
\end{table}

The ordering inverts relative to the image benchmarks, and does so in the direction least favourable to adaptive allocation. HAS-FL is the \emph{weakest} of the three policies despite consuming the \emph{largest} average capacity, and it is the only one whose accuracy degrades between its peak and the end of training (a four-point decline, against one point for FjORD and half a point for Uniform). On the one benchmark whose heterogeneity is not synthetic, allocating capacity by estimated divergence is therefore not merely uninformative---as the matched-budget control of Section~\ref{subsec:rq2} established---but actively harmful.

The behaviour of the Uniform baseline is equally instructive: it does not collapse here, even though it caps every client at half width exactly as in the image experiments. The explanation lies in which parameters the cap leaves untrained. In the convolutional models, capping every client freezes the outer channels of every layer \emph{including the classifier}, so the frozen coordinates sit directly on the output path and corrupt predictions (Proposition~\ref{prop:frozen}). In the recurrent model, sub-model extraction slices hidden units within each gate block, but the character embedding and all rows of the output projection are shared in full by every client; the untrained coordinates are interior hidden units whose effect is to reduce effective capacity rather than to distort the output mapping. Parameter coverage thus matters most where it intersects the output layer, which sharpens rather than weakens the conclusion of Section~\ref{subsec:ablation_coverage}: what must be preserved is coverage of the parameters that determine predictions, not merely of the parameter count.

We note two limitations of this benchmark. Absolute accuracies (25--35\%) are well below published LEAF results, reflecting our deliberately small two-layer LSTM, the 200-round budget, and a single local epoch; the comparison across allocation policies is nevertheless internally consistent, since every policy trains the identical architecture under the identical protocol. Second, with twenty role-clients the federation is small, so the natural partition is heterogeneous but not large-scale.

\subsection{Who Inherits the Confound? Implications for Update-Based Estimation}
\label{subsec:implications}

The capacity confound of Section~\ref{subsec:rq2} partitions the literature into two populations.
Methods that infer data properties from client updates while holding model size fixed are unaffected: clustered federated learning groups clients by the cosine similarity of equal-size gradients~\citep{sattler2020clustered}, IFCA assigns cluster identities by lowest local loss under candidate models~\citep{ghosh2020clustered}, FedGroup clusters by decomposed optimization-direction similarity~\citep{duan2021fedgroup}, and loss- or gradient-norm-based client selection~\citep{cho2022towards,marnissi2021client} likewise compares like with like.
In the equal-model regime these signals are legitimate data proxies.
They lose that meaning the moment sub-model width varies with device capacity: a quarter-width client mechanically produces smaller, differently oriented updates regardless of its data---precisely the coupling we measure ($r$ between $-0.72$ and $-0.84$ across all runs).
This places methods that size or shape sub-models from training-derived signals---capability-driven pruning ratios~\citep{jiang2022fedmp}, data-driven channel importance~\citep{li2021hermes}, magnitude-based sub-model composition~\citep{wu2024fiarse}, and learned sparse ratios that explicitly superimpose capability and data effects~\citep{xue2025fedlps}---in the inherited-risk population: any data-property inference drawn from their updates mixes data with capacity.
Our results suggest a concrete reporting practice for this setting: whenever client statistics are estimated from heterogeneous sub-model updates, validate them against a capacity-stratified analysis (as in Table~\ref{tab:rq2_correlations}) before attaching a data-heterogeneity interpretation.

\section{Conclusion}
\label{sec:conclusion}

This paper used Heterogeneity-Aware Adaptive Sub-model Federated Learning (HAS-FL) as a test case for studying a question that comes before any such design. Can client data heterogeneity be estimated from the sub-model updates a federated server observes, and does allocating capacity by that estimate help? The hypothesis we started from was that clients with divergent data need more capacity, while clients whose data matches the population can contribute through smaller sub-models. Our evidence does not support that hypothesis, and it shows why.

We report three findings. First, when validated against ground-truth label divergence on reproducible partitions, update-divergence estimates of heterogeneity are dominated by capacity rather than by data. The correlation with capacity falls between $-0.72$ and $-0.84$ on all six runs, and no data signal remains once capacity is controlled for. The cause is structural. Sub-models of different widths differ by an order of magnitude in the size and norm of their updates, which is far larger than any variation caused by data, and the effect persists across two corrected estimators.

Second, capped allocation has a failure mode that is easy to miss and can be proven rather than merely observed. If every client is capped below full width, the uncovered coordinates never move from their initialization while still taking part in every prediction (Proposition~\ref{prop:frozen}). The damage ranges from gradual decay on CIFAR-10 to a fall to near-chance accuracy on the 62-class EMNIST task, and a coverage guarantee removes it. The Shakespeare results make the claim more precise. What must stay covered is the parameters on the output path, because a recurrent model whose embedding and output projection are shared in full can tolerate uncovered hidden units without collapsing.

Third, a matched-budget control settles the practical question. Random allocation to the same average budget performs no differently from the heterogeneity-aware policy on both image benchmarks. On the naturally partitioned Shakespeare benchmark, the adaptive policy is the weakest of the three strategies tested while using the most capacity. Sub-model training itself remains useful, since half-width clients train and communicate at roughly a quarter of the cost, and this admits devices that could not otherwise take part. Even so, it gives up roughly eight to thirteen accuracy points against full-model training, and what governs that trade-off is parameter coverage and total budget rather than allocation intelligence.

Several limitations apply. Our federations use twenty clients in simulation rather than physical devices, which is a scale typical of the sub-model literature we build on but small compared with production deployments. Absolute accuracies on Shakespeare are well below published LEAF results, because we use a deliberately small recurrent model and a short training budget. The comparisons remain valid within the study, since every policy trains an identical architecture under an identical protocol. The heterogeneity measure we study is one natural choice, and a negative result about it does not rule out estimators built from other signals. Any such estimator, however, faces the same difficulty of separating data effects from capacity effects while sub-model width varies with the device. Working out which data statistics can still be recovered in that setting, whether by estimating within capacity groups, by using probe batches of a fixed width, or by having clients report statistics directly, is the natural next step.

The wider implication is a warning about a design pattern rather than about one algorithm. Inferring client properties from training signals is standard practice in federated learning, for client selection, weighting, and clustering, and it is sound when every client trains a model of the same size. Once sub-model training makes model size depend on device capacity, that inference changes meaning without any warning, and methods that set sub-model size from training signals inherit the same confound. We therefore recommend two practices. Any such method should check its estimates against an analysis that groups clients by capacity before treating them as a measure of data heterogeneity. Adaptive allocation schemes should be reported against a matched-budget random control, which in our experiments was enough to explain the benefits they report.

\end{document}